\newcommand{\onlyicml}[1]{}
\newcommand{\E}{\mathbb{E}}
\newcommand{\EE}[1]{\E\!\left[{#1}\right]}
\newcommand{\Ee}[2]{\E_{{#1}}\!\left[{#2}\right]}
\newcommand{\norm}[1]{\left\lVert{#1}\right\rVert}
\newcommand{\abs}[1]{\left\lvert{#1}\right\rvert}
\DeclareMathOperator\spn{span}
\definecolor{darkgreen}{rgb}{0,0.4,0.0}
\newcommand{\removed}[1]{}
\newtheorem{theorem}{Theorem}
\newtheorem{lemma}{Lemma}
\newcommand{\DD}{\mathcal{D}}
\newcommand{\tDD}{\mathcal{\tilde{D}}}
\newcommand{\allw}{\mathbf{w}}
\newcommand{\tF}{\mathbf{F}}
\newcommand{\wopt}{w_{\star}}
\icmltitlerunning{Semi-Cyclic Stochastic Gradient Descent}
\begin{document}

\twocolumn[
\icmltitle{Semi-Cyclic Stochastic Gradient Descent}

\icmlsetsymbol{equal}{*}

\begin{icmlauthorlist}
\icmlauthor{Hubert Eichner}{Google}
\icmlauthor{Tomer Koren}{Google}
\icmlauthor{H. Brendan McMahan}{Google}
\icmlauthor{Nathan Srebro}{TTIC}
\icmlauthor{Kunal Talwar}{Google}
\end{icmlauthorlist}

\icmlaffiliation{Google}{Google.}
\icmlaffiliation{TTIC}{Toyota Technological Institute at Chicago.  Part of this work was done while NS was visiting Google.  Authors are listed alphabetically}
\icmlcorrespondingauthor{\onlyicml{NS}}{\texttt{huberte}, \texttt{mcmahan}, \texttt{kunal}, \texttt{tkoren@google.com},\, \texttt{nati@ttic.edu}}

\onlyicml{\icmlkeywords{Machine Learning, ICML}}

\vskip 0.3in
]

\printAffiliationsAndNotice{}  %

\begin{abstract}
We consider convex SGD updates with a block-cyclic structure, i.e.~where each
cycle consists of a small number of blocks, each with many samples
from a possibly different, block-specific, distribution.  This situation arises, e.g., in Federated Learning where the mobile devices available for updates at different times during the day have different characteristics. We show that such block-cyclic structure can significantly deteriorate the
performance of SGD, but propose a simple  approach that allows prediction with the same performance guarantees as for i.i.d., non-cyclic, sampling.
\end{abstract}

\section{Introduction}\label{sec:intro}

When using Stochastic Gradient Descent (SGD), it is important that
samples are drawn at random.  In contrast, cycling over a specific
permutation of data points (e.g.~sorting data by label, or even
alphabetically or by time of collection) can be detrimental  to the
performance of SGD.  But in some applications, such as in Federated
Learning \citep{FL3, FL4} or when running SGD in ``real time'', some cyclic patterns in
the data samples are hard to avoid.  In this paper we investigate how
such cyclic structure hurts the effectiveness of SGD, and how this
can be corrected.  We model this semi-cyclicity through a ``block-cyclic''
structure, where during the course of training, we cycle over blocks
in a fixed order and samples for each block are drawn from a block-specific distribution.

Our primary motivation is Federated Learning. In this setting, mobile devices collaborate in the training of a shared model while keeping the training data decentralized. Devices communicate updates (e.g. gradients) to a coordinating server, which aggregates the updates and applies them to the global model. In each iteration (or round) of Federated SGD, typically a few hundred devices are chosen randomly by the server to participate; critically, however, only devices that are idle, charging, and on a free wireless connection are selected~\cite{FL_BLOG,bonawitz19sysml}. This ensures Federated Learning does not impact the user's experience when using their phone, but also can produce significant diurnal variation in the devices available, since devices are more likely to meet the training eligibility requirements at night local time. For example, for a language model \citep{gboard}, devices from English speakers in India and America are likely available at different times of day.

When the number of blocks in each cycle is high and the number of
samples per block is low, the setting approaches fully-cyclic SGD,
which is known to be problematic.  But what
happens when the number of blocks is fairly small and each blocks
consists of a large number of samples?  E.g., if there are only two
blocks corresponding to ``day'' and ``night'', with possibly millions
of samples per block.  %
An optimist would hope that this ``slightly cyclic'' case is much easier
than the fully cyclic case, and the performance of SGD degrades gracefully
as the number of blocks increases.

Unfortunately, in Section \ref{sec:lower} we show that even with only
two blocks, the block-cyclic 
sampling can cause an arbitrarily bad slowdown.  One might ask whether alternative
forms of updates, instead of standard stochastic gradient updates, can alleviate this degradation in performance.  We show that such a slowdown
is unavoidable for any iterative method based on semi-cyclic samples.

Instead, the solution we suggest is to embrace the heterogeneity in
the data and resort to a {\em pluralistic} solution, allowing a
potentially different model for each block in the cycle (e.g.~a
``daytime'' model and a ``nighttime'' model).  A naive pluralistic
approach would still suffer a significant slowdown as it would not
integrate information between blocks (Section \ref{sec:plur}).
In Section~\ref{sec:pluravg} we show a remarkably simple and practical pluralistic approach that allows us to obtain exactly the same guarantees as with i.i.d.~non-cyclic sampling, thus entirely alleviating the problem
introduced by such data heterogeneity---as we also demonstrate empirically in Section \ref{sec:experiments}.  In Section \ref{sec:hedge} we go even further and show how we can maintain the same guarantee without any deterioration while also being competitive with separately learned predictors, hedging our bets in case the differences between components are high.

\section{Setup}

We consider a stochastic convex optimization problem
\begin{equation}
  \label{eq:F}
  F(w) = \Ee{z \sim \DD}{f(w,z)}
  \quad\text{where}\quad 
  \DD = \frac{1}{m} \sum_{i=1}^m \DD_i,
\end{equation}
where each component $\DD_i$ represents the data distribution
associated with one of $m$ blocks $i=1,\ldots,m$.  For simplicity, we assume a uniform mixture; our results can be easily extended to non-uniform mixing weights and corresponding non-uniform block lengths.  In a learning setting, $z=(x,y)$ represents a labeled example and the instantaneous objective $f(w,(x,y))=\textit{loss}(h_w(x),y)$ is the loss incurred if using the model $w$.  %

We assume $f(w,z)$ is convex and 1-Lipschitz with respect to $w$,
which lives in some high-, possibly infinite-dimensional, Euclidean or
Hilbert space.  Our goal is to compete with the best possible
predictor of some bounded norm $B$, that is to learn a predictor
$\hat{w}$ such that $F(\hat{w})\leq F(\wopt) + \epsilon$
where
  $F(\wopt) = \inf_{\norm{w}\leq B} F(w)$.
We consider Stochastic Gradient Descent (SGD) iterates on the above objective:
\begin{equation}
  \label{eq:SGD}
  w_{t+1} \leftarrow w_{t} - \eta_{t} \nabla f(w_t,z_t).
\end{equation}
If the samples $z_t$ are chosen independently
from the data distribution $\DD$, then with an appropriately chosen
stepsizes $\eta_t$, SGD attains the mini-max optimal error guarantee \citep[e.g.][]{shalev2009stochastic}:
\begin{equation}
  \label{eq:SGDbound}
  \EE{F(\overline{w})} \leq F(\wopt) + O\left(\sqrt\frac{B^2}{T}\right)
\end{equation}
where $\overline{w} = \frac{1}{T} \sum_{i=1}^T w_i$ and $T$ is the
total number of iterations, and thus also the total number of samples
used.

But here we study {\em Block-Cyclic SGD}, which consists of $K$ {\em
  cycles} (e.g., days) of $mn$ iterations each, for a total of $T=Kmn$
iterations of the update \eqref{eq:SGD}.  In each cycle, the first $n$ samples
are drawn from $\DD_1$, the second $n$ from $\DD_2$ and so forth.
That is, samples are drawn according to
\begin{equation}\label{eq:seqkmn}
  z_{t(k,i,j)}
  \sim\DD_i \quad\textrm{where } t(k,i,j)\!=(k\!-\!1)nm+(i\!-\!1)n+j
\end{equation}
where $k=1..K$ indexes the cycles, $i=1..m$ indexes blocks and
$j=1..n$ indexes iterations within a block and
thus $t(k,i,j)\in\{1,\ldots,T\}$ indexes the overall 
sequence of samples and corresponding updates.  
The samples are therefor no longer identically distributed and the standard SGD analysis no longer valid. We study the effect of such block-cyclic sampling on the SGD updates \eqref{eq:SGD}.

One can think of $\DD$ as the population distribution, with each step
of SGD being based on a fresh sample; or of $\DD$ as an
empirical distribution over a finite training set that is partitioned to $m$
groups.  It is important however not to confuse a
{\em cycle} over the different components with an {\em epoch} over all
samples---the notion of an {\em epoch} or the size of the support of
$F$ (i.e.~the number of training points if $F(w)$ is viewed as an
empirical objective) do not enter our analysis.  Either
view is consistent with our development, though our discussion will
mostly refer to the population view of SGD based on fresh samples.

\section{Lower Bounds for Block-Cyclic Sampling}\label{sec:lower}

How badly can the block-cyclic sampling \eqref{eq:seqkmn} hurt the SGD
updates \eqref{eq:SGD}?  Unfortunately, we will see that the effect
can be quite detrimental, especially when the number of samples $n$ in each
block is very large compared to the overall number of cycles $K$, and
even if the number of components $m$ is small.  This is a typical
regime in Federated Learning if we consider daily cycles, as we would
hope training would take a small number of days, but the number of SGD
iterations per block can be very high. 
For example, recent models for Gboard were trained for \num{1.2e6} sequential steps over 4-5 days ($K = 4$ or $5$, $nm = \num{3e5}$, and e.g. $m=6$ and $n=\num{5e4}$ if the day is divided into six 4-hour blocks) \cite{gboard}.\footnote{The cited work uses the Federated Averaging algorithm with communication every 400 sequential steps, as opposed to averaging gradients every step, so while not exactly matching our setting, it provides a rough estimate for realistic scenarios.}

To see how even $K=2$ can be very bad, 
consider an extreme setting where the
objective $f(w,z)$ depends only on the component of $z$, but is the
same for all samples from each $\DD_i$.  Formally, let $z=\{1,2\}$
with $P(z=i)=1$. That is there are only two possible examples which we
call ``1'', which is the only example in the support of the first component $\DD_1$ and ``2'', which is the only example in the support of $\DD_2$.  The problem then reduces to a deterministic optimization
problem of minimizing $F(w)=\frac{1}{2}f(w,1)+\frac{1}{2}f(w,2)$, but
in each block we see gradients only of one of the two terms.  With a
large number of iterations per block $n$, we might be able to optimize
each term separately very well.  But we also know that in order to
optimize such a composite objective to within $\epsilon$, even if we
could optimize each component separately, requires alternating between
the two components at least $\Omega(B/\epsilon)$ times
\cite{woodworth2016tight}, establishing a lower bound on the number
of required cycles, independent of the number of iterations per cycle.  This extreme deterministic situation establishes the limits of what can be done with a limited number of cycles, and captures the essence of the difficulty with block-cyclic sampling.

In fact, the lower bound we shall see applies not only to the precise
SGD updates \eqref{eq:SGD} but to {\em any} method that uses
block-cyclic samples as in \eqref{eq:seqkmn}, and at each iteration
access $f(\cdot,z_t)$, i.e.~the objective on example $z_t$.  More
precisely, consider any method that at iteration $t$ chooses a ``query
point'' $w_t$ and evaluates $f(w_t,z_t)$ and the gradient (or a
sub-gradient) $\nabla f(w_t,z_t)$, where $w_t$ may be chosen in {\em
  any} way based on all previous function and gradient evaluations
(the SGD iterates \eqref{eq:SGD} is just one examples of such a
method).  The output $\hat{w}$ of the algorithm can then be chosen as
any function of the query points $w_t$, the evaluated function values and the
gradients.

\begin{theorem} \label{thm:LowerLip}
  Consider any (possibly randomized) optimization method of the form
  described in the previous paragraph, i.e.~where access to the
  objective is by evaluating $f(w_t,z_t)$ and $\nabla f(w_t,z_t)$ on
  semi-cyclic samples \eqref{eq:seqkmn} and where $w_t$ is chosen
  based on $\left\{(f(w_s,z_s),\nabla f(w_s,z_s)),s<t\right\}$ and the
  output $\hat{w}$ based on all iterates\footnote{This theorem, as well
    as Theorem \ref{thm:Lowersmooth}, holds even if the method is
    allowed ``prox queries''\removed{, that is to solve an optimization problem
    over $f(w,z_t)$} of the form $\arg\min_w f(w,z_t)+\lambda_t
    \norm{w-w_t}^2$.\removed{where $w_t$ and $\lambda_t$ are chosen by the
    method}}.  For any $B,n,K$ and $m>1$ there exists a 1-Lipschitz
  convex problem over high enough dimension such that $\EE{F(\hat{w})}\geq F(\wopt)+\Omega(B/K)$, where
  the expectation is over $z_t$ and any randomization in the method.
\end{theorem}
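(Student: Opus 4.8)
The plan is to remove the stochasticity and reduce to a deterministic composite optimization problem, to which the alternation lower bound of \citet{woodworth2016tight} (already foreshadowed in the text) can be applied. I would take each block distribution $\DD_i$ to be a point mass on a distinct example $i$, so that the loss seen throughout block $i$ is a single fixed convex $1$-Lipschitz function $f_i$, the objective is the composite $F=\frac1m\sum_{i=1}^m f_i$, and the semi-cyclic oracle \eqref{eq:seqkmn} becomes exactly a composite first-order oracle that over each of the $K$ cycles returns (sub)gradients of $f_1$, then $f_2$, \dots, then $f_m$ in this fixed order. Because the $\DD_i$ are now deterministic, the only remaining randomness is the method's internal coins, over which the claimed bound will hold in expectation.

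The core is a span (or ``frontier'') argument establishing that the number of cycles $K$, not the per-block count $n$, controls the attainable accuracy. I would use a high-dimensional Nesterov-style ``chain'' instance whose minimizer $\wopt$ has coordinates that can only be uncovered in sequence: from any iterate supported on the first $r$ chain coordinates, the subgradient returned by the currently active component lies in the span of the first $r{+}1$ coordinates, and extends this frontier by one coordinate only when it is that component's turn in the cyclic schedule. The decisive consequence is that all of the $n$ consecutive queries made within a single block keep the iterates inside the same subspace, so repeated within-block queries are worth no more than one: progress happens only as the schedule moves between components. Arranging the chain so that a full cycle advances the frontier by only a constant number of coordinates then caps the total advance at $O(K)$ after $K$ cycles, independently of $n$ and $m$; since projecting $\wopt$ onto an $O(K)$-dimensional subspace still leaves $F$ at least $\Omega(B/K)$ above $F(\wopt)$, this is precisely the quantitative content of the $\Omega(B/\epsilon)$-alternations bound of \citet{woodworth2016tight}.

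Finally I would address two technical points. To obtain the bound for arbitrary \emph{randomized} methods against a single fixed instance, I would randomize the hard instance by applying a random orthogonal rotation to the chain coordinates and invoke Yao's minimax principle: a query placed outside the already-uncovered subspace reveals, in expectation, nothing about the rotated chain, so the frontier argument is unaffected, and Yao upgrades ``the hard distribution defeats every deterministic method'' to ``some fixed instance defeats every randomized method.'' The \emph{prox} queries permitted by the footnote are handled the same way, since the proximal map of a chain function evaluated at a point of the current subspace again lands in that subspace enlarged by a single coordinate, leaving the one-coordinate-per-block bound intact. The step I expect to be the main obstacle is the quantitative design of the chain: choosing the link geometry, the assignment of chain pieces to the $m$ components, and the scaling so that each $f_i$ is genuinely $1$-Lipschitz, the averaged $F$ stays $1$-Lipschitz, and yet a full cycle advances the frontier by only $O(1)$ links so that the suboptimality is $\Omega(B/K)$ uniformly in $m$ --- this bookkeeping is exactly the technical heart of the composite lower bound being invoked.
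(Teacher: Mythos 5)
Your proposal takes essentially the same route as the paper's proof: the paper likewise reduces to a deterministic instance (point-mass block distributions), runs a chain/span ``frontier'' argument in which a whole cycle can reveal only $O(1)$ new orthogonal directions no matter how large $n$ is, and handles randomized methods, prox queries, and off-span exploration by drawing the chain vectors randomly in dimension $\tilde{O}(K^5n^2m^2)$ with a flattened piecewise-quadratic link function. The bookkeeping you flag as the main obstacle is resolved in the paper by importing the construction of \citet{woodworth2018graph} with only \emph{two} distinct functions---one served by all blocks in the first half of the cycle and one by the second half---so each cycle advances the frontier by at most two vectors, giving the $\Omega(B/K)$ bound uniformly in $m$.
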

\begin{proof}
  Let $P(z=1|i<m/2)=1$ and $P(z=2|i\geq m/2)=1$, with the following functions taken from \citet{woodworth2018graph}, which in turn is based on the constructions in \citet{arjevani2015communication,woodworth2017lower,carmon2017lower,woodworth2017lower}:
 \begin{equation}
      \begin{aligned}
      f(w,1)&=\frac{\eta}{8}\Big(-2a \langle v_1,w \rangle + \phi(\langle v_{4K},w\rangle)
      \\ &\quad  +\sum_{k=1}^{2K-1}\phi(\langle v_{2k}-v_{2k+1},w \rangle)\Big) \\
      f(w,2)&=\frac{\eta}{8}\left(\sum_{k=1}^{2K}\phi(\langle v_{2k-1}-v_{2k},w \rangle)\right)
      \end{aligned}
  \end{equation}
where $v_r$ are orthogonal vectors,
$\eta=4BK,\gamma=2B/(\eta\sqrt{K}),a=1/\sqrt{64K^3}$, and for now
consider $\phi(x)=2 \gamma \abs{x}$.  The main observation is that each vector
$v_{2k+i}$ is only revealed after $w$ includes a component in
direction $v_{2k+i-1}$ (more formally: it is not revealed if
$w\in\spn\{v_1,\ldots,v_{2k+i-2}\}$), and only when $f(w,i)$ is
queried \citep[Lemma 9]{woodworth2018graph}.  That is, each cycle will reveal at most two vectors,
$v_{2k+1}$ for queries on the first half of the blocks, and $v_{2k+2}$
for queries on the second half.  After $K$ cycles, the method would only encounter vectors in the span of the first $2K$ vectors $v_1,\ldots,v_{2K}$.
But for $\hat{w}\in\spn\{v_1,\ldots,v_{2K}\}$, we have $F(\hat{w})\geq F(\wopt)
+ \frac{B}{96K}$ \citep[Lemma 8]{woodworth2018graph}.  These arguments apply if the method does not leave the span of gradients returned so far.  Intuitively, in high enough dimensions, it is futile to investigate new directions aimlessly.  More formally, to ensure that trying out new directions over $T=Kmn$ queries wouldn't help, following appendix C of \citep{woodworth2018graph}, we can choose $v_r$ randomly in $R^{\tilde{O}(K^5n^2m^2)}$ and use a piecewise quadratic $\phi(x)$ that is 0 for $\abs{z}\leq a/2$ and is equal to $\phi(x)=2\gamma \abs{x}-\gamma^2 - a^2/2$
for $\abs{x}\geq\gamma$. 
\end{proof}

Theorem \ref{thm:LowerLip} establishes that once we have a
block-cyclic structure with multiple blocks per cycle, we cannot
ensure excess error better then:
\begin{equation}
    \label{eq:lower1}
    F(\hat{w})-F(\wopt)=\Omega\left(\frac{B}{K}\right)=\Omega\left(\sqrt\frac{B^2}{T}\sqrt{\frac{mn}{K}}\right).
\end{equation}
Compared to using i.i.d.~sampling as in Eq.~\eqref{eq:SGDbound}, this is worse by a factor of $\sqrt{mn/K}$, 
which is very large when the number of samples per block
$n$ is much larger then the number of cycles $K$, as we would expect in many applications. \removed{Said differently, reaching excess error $\epsilon$
might take $K = B/\epsilon = \sqrt{nm K_{\textrm{iid}}}$ cycles, where
$K_{\textrm{iid}}$ is the number of cycles required with
i.i.d.~sampling and $nm$ iterations per cycle.}  

For smooth objectives (i.e., with Lipschitz gradients) the situation is quantitatively a bit better,
but we again can be arbitrarily worse then i.i.d.~sampling as the number $n$ of iterations per
block increases:
\begin{theorem}\label{thm:Lowersmooth}
  Under the same setup as in Theorem \ref{thm:LowerLip}, for any
  $B,n,K$ and $m>1$ there exists a 1-Lipschitz convex problem where
  the gradient $\nabla_w f(w,z)$ is also 1-Lipschitz, such that
  $\EE{F(\hat{w})}\geq F(\wopt)+\Omega(B^2/K^2)$.
  \end{theorem}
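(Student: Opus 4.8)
The plan is to keep the entire architecture of the proof of Theorem~\ref{thm:LowerLip} intact --- the two-function split into $f(w,1)$ and $f(w,2)$ over orthogonal vectors $v_r$, the block-cyclic revealing mechanism, and the random high-dimensional embedding --- and to change only the scalar nonlinearity $\phi$, replacing the (essentially piecewise-linear) choice by a genuinely smooth, Huber-type one whose derivative is Lipschitz. Concretely I would take $\phi$ to be flat near the origin, quadratic in a central band of half-width $\gamma$ (so $\phi(x)=\tfrac{1}{2\gamma}x^2$ up to the flat core), and linear with rounded corners in the tails, so that $\phi$ remains $1$-Lipschitz while $\phi'$ becomes $1/\gamma$-Lipschitz. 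The global scale $\eta$ and the band-width $\gamma$ are then the two free knobs to tune so that, \emph{simultaneously}, $F$ is $1$-Lipschitz and $\nabla_w F$ is $1$-Lipschitz; this is the smooth counterpart of the calibration $\eta=4BK,\ \gamma=2B/(\eta\sqrt K),\ a=1/\sqrt{64K^3}$ used above.

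First I would check that smoothing does not disturb the revealing/zero-chain property. The argument that a query to $f(\cdot,i)$ activates at most one new direction $v_{2k+i}$, and only once $w$ already has a component along $v_{2k+i-1}$, depends only on the support structure of the coupling terms $\phi(\langle v_{2k}-v_{2k+1},w\rangle)$ and on $\phi'(0)=0$, not on the shape of $\phi$ away from zero; the Huber-type $\phi$ still satisfies $\phi'(0)=0$ and a flat core, so a point in $\spn\{v_1,\dots,v_{2k+i-1}\}$ yields a gradient in $\spn\{v_1,\dots,v_{2k+i}\}$. Hence, exactly as before, after $K$ cycles every iterate and the output $\hat w$ lie in $\spn\{v_1,\dots,v_{2K}\}$, by the smooth analogue of \citep[Lemma 9]{woodworth2018graph}.

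The substantive step is to re-derive the value gap over $\spn\{v_1,\dots,v_{2K}\}$ under the smooth $\phi$: here the combinatorics is unchanged but the arithmetic is not. Whereas with $\phi(x)=2\gamma\abs{x}$ the residual cost of the un-activated directions scaled linearly and produced $\Omega(B/K)$, with the quadratic core the optimal ``profile'' along the chain of $\Theta(K)$ coupled coordinates is the linearly-decaying minimizer of the underlying tridiagonal Nesterov construction, and the best value attainable once coordinates $2K{+}1,2K{+}2,\dots$ are forced to zero exceeds $F(\wopt)$ by $\Omega(B^2/K^2)$ --- the standard smooth-optimization gap with $K$ in the role of the iteration count. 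I would obtain this by computing $F(\wopt)$ explicitly and comparing against the restricted optimum, which is the smooth analogue of \citep[Lemma 8]{woodworth2018graph}.

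The main obstacle I anticipate is precisely the simultaneous calibration of $\eta$ and $\gamma$. Forcing $\nabla_w F$ to be genuinely $1$-Lipschitz requires $\gamma$ large enough that the quadratic regime $\abs{\langle v_{2k}-v_{2k+1},w\rangle}\le\gamma$ actually contains the relevant range of inner products along the optimal path (otherwise the construction slips back into the linear regime and only the $1/K$ rate survives), while keeping $F$ itself $1$-Lipschitz caps $\eta$; the two constraints pull in opposite directions, and their balance is exactly what converts the $1/K$ rate into $1/K^2$. Once these constants are fixed, the remaining ingredients --- the randomized embedding of the $v_r$ in dimension $\tilde{O}(K^5 n^2 m^2)$ and the treatment of $\phi$ on the tails to defeat aimless exploration over the $T=Kmn$ queries --- carry over unchanged from the proof of Theorem~\ref{thm:LowerLip}, yielding $\EE{F(\hat w)}\ge F(\wopt)+\Omega(B^2/K^2)$ in expectation over the samples $z_t$ in \eqref{eq:seqkmn} and the method's internal randomness.
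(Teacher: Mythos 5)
Your proposal follows essentially the same route as the paper's proof: keep the construction of Theorem~\ref{thm:LowerLip} verbatim, rescale $\eta$, and replace $\phi$ by exactly the Huber-type function the paper uses (flat core, $C^1$ quadratic transition and middle, linear tails), with the revealing/zero-chain argument and the high-dimensional randomized embedding carrying over unchanged and the restricted-span gap becoming $\Omega(B^2/K^2)$. The only difference is that you propose to re-derive the smoothness of the objective and the value gap over $\spn\{v_1,\ldots,v_{2K}\}$ by hand, whereas the paper imports these as Lemmas 7 and 8 of \citet{woodworth2018graph}.
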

\begin{proof}
  Use the same construction as in Theorem \ref{thm:LowerLip}, but with $\eta=B^2$ and 

\begin{minipage}[c]{0.35\textwidth}
\vspace{-5mm}
$$ \phi(z) = 
\begin{cases} 
0 & \abs{z} \leq a/2 \\
2(\abs{z} - a/2)^2 & a/2 < \abs{z} \leq a \\
z^2 - a^2/2 & a < \abs{z} \leq \gamma \\
2\gamma\!\abs{z} -\! \gamma^2\! - a^2/2 & \abs{z} > \gamma
\end{cases}  $$
\end{minipage}
\begin{minipage}[c]{0.12\textwidth}
\includegraphics[height=20mm,width=24mm]{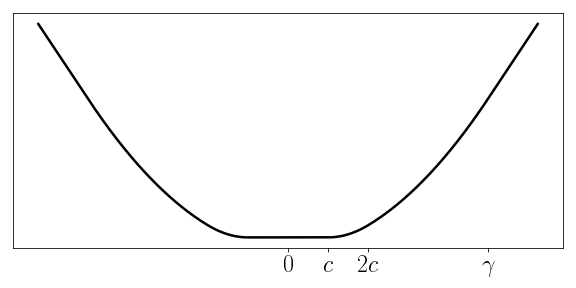}
\end{minipage} %
The objective is smooth \citep[Lemma 7]{woodworth2018graph}, and the same arguments as in the proof of Theorem \ref{thm:LowerLip} hold except that now $\hat{w}$ spanned by $v_1,\ldots,v_{2K}$ has $F(\hat{w})\geq F(\wopt)
+ \frac{B^2}{256K}$ \citep[Lemma 8]{woodworth2018graph}.
\end{proof}
That is, the best we can ensure with block-cyclic sampling, even if the objective is smooth, is an excess error of:
\begin{equation}\label{eq:lowerboundsmooth}
\epsilon = \Omega\left( \frac{B^2}{K^2} \right) = \Omega\left(
\sqrt\frac{B^2}{T} \sqrt\frac{B^2 mn}{K^3}
\right),
\end{equation}
i.e., worse by a $\sqrt{\frac{B^2 mn}{K^3}}$ factor compared to i.i.d.~sampling.

\section{A Pluralistic Approach}\label{sec:plur}

The slowdown discussed in the previous section is due to the
difficulty of finding a {\em consensus} solution $w$ that is good for
all components $\DD_i$.  But thinking of the problem as a learning
problem, this should be avoidable.  Why should we insist on a single
consensus predictor?  In a learning setting, our goal is to be able to
perform well on future samples $z\sim \DD$.  If the data arrives
cyclically in blocks, we should be able to associate each sample $z$
with the mixture component $\DD_i$ it was generated from.  Formally,
one can consider a joint distribution $(i,z)\sim\tDD$ over samples and
components, where $i$ is uniform over $1..m$ and $z\,|\,i\sim\DD_i$.  In
settings where the data follows a block-cyclic structure, it is
reasonable to consider the component index $i$ as observed and
leverage this at prediction (test) time.

For example, in Federated Learning, this could be done by
associating each client with the hours it is  available for training. If it
is available during multiple blocks, we can associate with it a
distribution over $i$ reflecting its availability.

Instead of learning a single consensus model for all components, we
can therefore take a pluralistic approach and learn a separate model
$w^i$ for each one of the components $i=1..m$ with the goal of
minimizing:
\begin{equation}
  \label{eq:plurGoal}
  \tF(\allw)=\Ee{(i,z)\sim\tDD}{f(w^i,z)} = \frac{1}{m}\sum_{i=1}^m F_i(w^i) ,
\end{equation}
where $\allw=(w^1,\ldots,w^m)$ and $F_i(w^i)=\Ee{z\sim\DD_i}{f(w^i,z)}$.

How can this be done?  A simplistic approach would be to learn a
predictor $w^i$ for each component $D_i$ separately, based only on
samples from block $i$.  This could be done by maintaining $m$
separate SGD chains, and updating chain $i$ only using samples from
block $i$:
\begin{multline}
  w^{i}_{t(k,i,j+1)} \leftarrow w^{i}_{t(k,i,j)}-\eta_{t(k,i,j)} \nabla
  f(w^i_{t(k,i,j)},z_{t(k,i,j)}) \\*
\textrm{where } w^{i}_{t(k+1,i,1)}=w^{i}_{t(k,i,n+1)} . \label{eq:separatew}
\end{multline}
We will therefore only have $Kn=T/m$ samples per model, but for each
model, the samples are now i.i.d., and we can learn
$\hat{w}^i$ such that:
\begin{equation}\label{eq:eachFi}
F_i(\hat{w}^i) \leq F_i(\wopt^i) + O\left(\sqrt\frac{B^2}{T/m}\right)
\end{equation}
where $\wopt^i = \arg\min_{\norm{w}\leq B} F_i(w)$, and so:
\begin{align}
  \tF(\hat{\allw}) 
  &= \frac{1}{m}\sum_{i=1}^m F_i(\hat{w}^i) \notag
  \\
  &\leq \frac{1}{m}\sum_{i=1}^m F_i(\wopt^i) + \label{eq:sepFi}
  O\left(\sqrt\frac{B^2}{T/m}\right) \\
  &\leq F(\wopt) +   O\left(\sqrt\frac{B^2}{T/m}\right). \label{eq:sepFstar}
\end{align}
That is, ensuring excess error $\epsilon$ requires
$T=O(m B^2/\epsilon^2)$ iterations, which represents
a slow-down by a factor of $m$ relative to overall i.i.d.~sampling.

The pluralistic approach can have other gains, owing to its multi-task
nature, if the $m$ sub-problems $\min F_i(w)$ are very different from
each other.  An extreme case is when the different subproblems are in
direct conflict, in a sense ``competing'' with each other, and the
optimal $\wopt^i$ for the different sub-populations are not compatible
with each other.  For example, this could happen if a very informative feature has
different effects in different subgroups.  In this case we might have
$F_i(\wopt^i) \ll F_i(\wopt)$, and there is a strong benefit to
the pluralistic approach regardless of the semi-cyclic sampling.

The lower bounds of Section \ref{sec:lower} involve a subtly different
conflicting situation, where there is a decent consensus predictor, but
finding it requires, in a sense, ``negotiations'' between the
different blocks, and this requires going back-and-forth between
them many times, as is possible with i.i.d.~non-cyclic data, but not
in a block-cyclic situation.  Learning separate predictors would
bypass this required conflict resolution.

An intermediate situation is when the sub-problems are ``orthogonal''
to each other, e.g.~different features are used in different problems.
In this case we have that $\wopt^i$ are orthogonal to each other, and we
might have that $\wopt = \sum_i \wopt^i$ and $F_i(\wopt^i)=F_i(\wopt)$.
However, in this case, we would have that on average (over $i$),
$\norm{\wopt^i}=\frac{\norm{\wopt}}{\sqrt{m}}$, and so with the
pluralistic approach we can learn relative to a norm-bound
$B'=B/\sqrt{m}$ that is $\sqrt{m}$ times smaller than would be
required when using a single consensus model.  This precisely negates
the slow-down in terms of $m$ of the pluralistic learn-separate-predictors
approach \eqref{eq:separatew}, and we recover the same performance as when learning a
consensus predictor based on i.i.d.~non-cyclic data.

The regime where we {\em would} see a significant slow-down is when the distributions $\DD_i$
are fairly similar.  By separating the problem into $m$
distinct problems, and not sharing information between
components, we are effectively cutting the amount of data, and updates, by a factor of $m$, causing the slow-down.  But if the distributions are indeed similar, then at least intuitively, the semi-cyclisity shouldn't be too much of a problem, and a simple single-SGD approach might be OK.  At an extreme, if all components are identical ($\DD_i=\DD$), the block-cyclic sampling is actually i.i.d.~sampling and we do not have a problem in the first place.

We see, then, that in extreme situations, semi-cyclicity is not a real
problem: if the components are extremely ``competing'', we would be
better off with separate predictors, while
if they are identical we can just use a single SGD chain and lose
nothing.  But how do we know which situation we are in?  Furthermore,
what we actually expect is a combination of the above 
scenarios, with some aspects being in direct competition between the
components, some being orthogonal while others being aligned and
similar.  Is there a simple approach
that would always allow us to compete with training a single model
based on i.i.d.~data?  That is, a procedure for learning
$\allw=(w^1,\ldots,w^m)$ such that:
\begin{equation}
  \label{eq:plur_goal}
  \tF(\allw) \leq F(\wopt) + O\left(\!\sqrt{\frac{B^2}{T}}\,\right).
\end{equation}

We would like to make a distinction between our objective here and that of multi-task learning.  In multi-task learning \citep[and many others]{caruana1997multitask}, one considers several different but possibly related tasks (e.g.~the tasks specified by our component distributions $\DD_i$) and the goal is to learn different ``specialized'' predictors for each task so as to {\em improve} over the consensus error $F(\wopt)$ while leveraging the relatedness between them so as to reduce sample complexity.  In order to do so, it is necessary to target a particular way in which tasks are related \citep{baxter2000model,ben2003exploiting}.  For example, one can consider shared sparsity \citep[e.g.][]{turlach2005simultaneous}, shared linear features or low-rank structure \citep{ando2005framework}, shared deep features \citep[e.g.][]{caruana1997multitask}, shared kernel or low nuclear-norm structure \citep[e.g.][]{argyriou2008convex,amit2007uncovering},  low-norm additive structure \citep[e.g.][]{evgeniou2004regularized}, or graph-based relatedness \citep[e.g.][]{evgeniou2005learning,maurer2006bounds}.  The success of multi-task learning then rests on whether the choosen relatedness assumptions hold, and the sample complexity depends on the specificity of this inductive bias.  But we do {\em not} want to make any assumptions about relatedness.  We would like a pluralistic method that {\em always} achieve the guarantee \eqref{eq:plur_goal}, without any additional or even low-order terms that depend on the relatedness.  We hope we can achieve this since in \eqref{eq:plur_goal}, we are trying to compete with the {\em fixed} solution $F(\wopt)$, and are resorting to pluralism only in order to overcome data heterogeneity, not in order to leverage it.

\section{Pluralistic Averaging}\label{sec:pluravg}
We give a surprisingly simple solution to the above problem.  
It {\em is} possible to compete with $F(\wopt)$
in a semi-cyclic setting, without any additional performance
deterioration (at least on average) and with no assumptions about the
specific relatedness structure between different components.  In fact,
this can be done by running a single semi-cyclic SGD chain
\eqref{eq:SGD}, which previously we discussed was problematic.  The
only modification is that instead of averaging all iterates to obtain
a single final predictor (or using a single iterate), we create $m$
different pluralistic models by averaging, for each component $i$,
only the iterates corresponding to that block:
\begin{equation}
  \label{eq:wtilde}
  \tilde{w}^i 
  = 
  \frac{1}{Kn} \sum_{k=1}^K\sum_{j=1}^n w_{t(k,i,j)}
  .
\end{equation}

\begin{theorem}\label{thm:pluravg}
  Consider semi-cyclic samples as in \eqref{eq:seqkmn}.  The pluralistic averaged solution $\tilde{\allw}$ given in
  \eqref{eq:wtilde} in terms of the iterates of \eqref{eq:SGD} with step size $\eta_t=B/\sqrt{2T}$ and starting at $w_1=0$,
  satisfies
  \begin{equation}
    \label{eq:mainE}
    \EE{\tF(\tilde{\allw})} \leq F(\wopt) + O\left(\!\sqrt{\frac{B^2}{T}}\,\right),
  \end{equation}
  where the expectation is w.r.t.~the samples.  \removed{In addition, for a
  bounded objective $0\leq f(w,z)\leq a$, and any $\delta>0$, with
  probability at least $1-\delta$ over the samples:
  \begin{equation}
    \label{eq:mainP}
    \tF(\tilde{\allw}) \leq F(\wopt) +
    O\left(\sqrt{\frac{B^2+a \log 1/\delta}{T}}\right).
  \end{equation}}
\end{theorem}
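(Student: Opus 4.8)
The plan is to reduce the theorem to the standard, \emph{order-independent} regret guarantee of online gradient descent, and to observe that pluralistic averaging is exactly the device that aligns the online-to-batch conversion with the block-cyclic sampling. Write $b(t)$ for the block index of step $t$ (so $b(t(k,i,j))=i$ and $z_t\sim\DD_{b(t)}$), and note that $w_t$ is a deterministic function of $z_1,\ldots,z_{t-1}$; hence conditioning on the past and using $z_t\sim\DD_{b(t)}$ gives $\EE{f(w_t,z_t)\mid z_1,\ldots,z_{t-1}}=F_{b(t)}(w_t)$, so that $\EE{f(w_t,z_t)}=\EE{F_{b(t)}(w_t)}$.

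First I would apply Jensen's inequality to each pluralistic iterate: since $F_i$ is convex and $\tilde{w}^i$ is the average of the $Kn$ iterates $w_{t(k,i,j)}$ visited during block $i$, we get $F_i(\tilde{w}^i)\le \frac{1}{Kn}\sum_{k,j}F_i(w_{t(k,i,j)})$. Averaging over $i$, the key bookkeeping step is that $\frac{1}{m}\sum_i\frac{1}{Kn}\sum_{k,j}F_i(w_{t(k,i,j)})$ is nothing but $\frac{1}{T}\sum_{t=1}^T F_{b(t)}(w_t)$, because each step $t$ is visited exactly once and its instantaneous objective is measured against the very component $\DD_{b(t)}$ that generated its gradient. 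Taking expectations and using the conditioning identity above turns this into $\frac{1}{T}\sum_t\EE{f(w_t,z_t)}$, the expected cumulative online loss of the single SGD chain.

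Next I would invoke the textbook online gradient descent regret bound, which holds for \emph{any} sequence of convex $1$-Lipschitz losses regardless of how the $z_t$ are ordered: telescoping $\norm{w_{t+1}-u}^2=\norm{w_t-u}^2-2\eta_t\langle\nabla f(w_t,z_t),w_t-u\rangle+\eta_t^2\norm{\nabla f(w_t,z_t)}^2$ with the fixed comparator $u=\wopt$, $\norm{\wopt}\le B$, $w_1=0$, and $\eta_t=B/\sqrt{2T}$ yields $\sum_t[f(w_t,z_t)-f(\wopt,z_t)]\le \frac{B^2}{2\eta}+\frac{\eta T}{2}=O(B\sqrt{T})$. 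Taking expectations and using $\EE{f(\wopt,z_t)}=F_{b(t)}(\wopt)$ (as $\wopt$ is fixed), together with $\sum_t F_{b(t)}(\wopt)=Kn\sum_i F_i(\wopt)=Kn\,m\,F(\wopt)=T\,F(\wopt)$, I would conclude $\frac{1}{T}\sum_t\EE{f(w_t,z_t)}\le F(\wopt)+O(\sqrt{B^2/T})$. Chaining this with the Jensen step gives the claim $\EE{\tF(\tilde{\allw})}\le F(\wopt)+O(\sqrt{B^2/T})$.

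The conceptual heart of the argument --- and the point I would stress rather than any computational obstacle --- is why pluralistic averaging succeeds where a single consensus average fails. The SGD regret bound controls $\frac{1}{T}\sum_t F_{b(t)}(w_t)$, the loss of each iterate on the block that produced its gradient, whereas the naive consensus conversion would need to control $F(\overline{w})\le\frac{1}{T}\sum_t F(w_t)=\frac{1}{T}\sum_t\frac{1}{m}\sum_i F_i(w_t)$, which mixes in components $F_i$ that never supplied a gradient at step $t$ and for which the chain offers no control. Averaging each $w_t$ only into the model $\tilde{w}^{b(t)}$ for its own block is precisely what makes the online-to-batch conversion respect the sampling distribution, so the block-cyclic ordering --- harmless for regret in the first place --- costs nothing. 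There is no genuinely hard step; the only care needed is the conditioning argument establishing $\EE{f(w_t,z_t)\mid z_1,\ldots,z_{t-1}}=F_{b(t)}(w_t)$ and the index bookkeeping identifying the double sum with $\frac{1}{T}\sum_t F_{b(t)}(w_t)$.
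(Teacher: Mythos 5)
Your proposal is correct and follows essentially the same argument as the paper: the order-independent online gradient descent regret bound, expectations via conditioning on the past (using that $w_t$ depends only on $z_1,\ldots,z_{t-1}$ while $z_t\sim\DD_{b(t)}$), and per-block Jensen to convert iterate averages into $F_i(\tilde{w}^i)$. The only differences are cosmetic --- you apply Jensen before invoking the regret bound rather than after, and you inline the telescoping proof of the regret guarantee that the paper cites from \citet{zinkevich2003online}.
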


The main insight is that the SGD guarantee can be obtained through
an online-to-batch conversion, and that the online guarantee itself
does {\em not} require any distributional assumption and so is valid also
for semi-cyclic data.  Going from Online Gradient Descent to Stochastic Gradient Descent, the i.i.d.~sampling {\em is} vital for the online-to-batch conversion. But by averaging only over iterates corresponding to samples from the same component, we are in a sense doing an online-to-batch conversion for each component separately, and thus over i.i.d.~samples.
  
\begin{proof}
Viewing the updates \eqref{eq:SGD} as implementing online gradient descent \cite{zinkevich2003online}\footnote{The development in \citet{zinkevich2003online} is for updates that also involve a projection: $w_{t+1} \leftarrow \Pi_B(w_t - \eta_t \nabla f(w_t,z_t))$ where $\Pi_B(w)=w/\!\max(\norm{w}\!/B,1)$.  See, e.g., \citet{shalev2012online}, for a development of SGD without the projection as in \eqref{eq:SGD}, and proof of the regret guarantee \eqref{eq:OGDreg} for these updates.  Although we present the analysis without a projection, our entire development is also valid with a projection as in \citet{zinkevich2003online}, which results in all the same guarantees}, we have the following online regret guarantee for {\em any} sequence $z_t$ (and in particular any sequence obtained from any sort of sampling) and any $w$ with $\norm{w}\leq B$:
\begin{equation}\label{eq:OGDreg}
\sum_{t=1}^T  f(w_t,z_t) \leq \sum_{t=1}^T f(w,z_t) + \sqrt{2B^2 T}.
\end{equation}
choosing $w=\wopt$ on the right hand side, dividing by $T$ and rearranging the summation we have:
\begin{multline}
    \label{eq:OGDbyT}
    \frac{1}{m} \sum_{i=1}^m \frac{1}{Kn} \sum_{k,j} f(w_{t(k,i,j)},z_{t(k,i,j)})\\*
    \leq \frac{1}{m} \sum_{i=1}^m \frac{1}{Kn} \sum_{k,j} f(\wopt,z_{t(k,i,j)}) + \sqrt{\frac{2B^2}{T}}.
\end{multline}
The above is valid for any sequence $z_t$. Taking $z_t$ to be random, regardless of their distribution, we can take expectations on both sides. For the semi-cyclic sampling \eqref{eq:seqkmn}, and since $w_t$ is independent of $z_t$ we have:
  \begin{multline}\label{eq:OGDbyTex}
    \EE{\frac{1}{m} \sum_{i=1}^m \frac{1}{Kn} \sum_{k,j} \Ee{z\sim\DD_i}{f(w_{t(k,i,j)},z)}} \\* 
    \leq \frac{1}{m} \sum_{i=1}^m \frac{1}{Kn} \sum_{k,j} 
    \Ee{z\sim\DD_i}{f(\wopt,z)} + \sqrt{\frac{2B^2}{T}},
\end{multline}
where the outer expectation on the left-hand-side is w.r.t.~$w_t$.  On the right hand side we have that $\Ee{z\sim\DD_i}{f(\wopt,z)}=F_i(\wopt)$.  On the left hand side, we can use the convexity of $f$ and apply Jensen's inequality:
\begin{multline}
    \label{eq:Jensen}
    \frac{1}{Kn} \sum_{k,j} \Ee{z\sim\DD_i}{f(w_{t(k,i,j)},z)} \\
    \geq \Ee{z\sim\DD_i}{\,f\Big(\frac{1}{Kn} \sum_{k,j} w_{t(k,i,j)},z\Big)\!} 
    = F_i(\tilde{w}_i)
    .
\end{multline}
Substituting \eqref{eq:Jensen} back into \eqref{eq:OGDbyTex} we have:
\begin{equation}
    \label{eq:FiFiBound}
    \EE{\frac{1}{m}\sum_{i=1}^mF_i(\tilde{w}_i)} \leq \frac{1}{m}\sum_{i=1}^mF_i(\wopt) + \sqrt{\frac{2B^2}{T}}.
\end{equation}
recalling the definition \eqref{eq:plurGoal} of $\tF(\tilde{\allw})$ and that $F(w)=\frac{1}{m}\sum_i F_i(w)$, we obtain the expectation bound \eqref{eq:mainE}. 
\end{proof}

An important, and perhaps surprising, feature of the guarantee of Theorem \ref{thm:pluravg} is that it does {\em not} depend on the number of blocks $m$, and does not deteriorate as $m$ increases.  That is, in terms of learning, we could partition our cycle to as many blocks as we would like, ensuring homogeneity inside each block and without any accuracy cost.  E.g.~we could learn a separate model for each hour, or minute, or second of the day.  The cost here is only a memory and engineering cost of storing, distributing and handling the plenitude of models, not an accuracy nor direct computational cost.  This is in sharp contrast to separate training for each component, as well as to most multi-task approaches.

\section{Pluralistic Hedging} \label{sec:hedge}

\newcommand{\whedge}{u}
\newcommand{\allwhedge}{\mathbf{u}}
\newcommand{\lrhedge}{\nu}

The pluralistic averaging approach shows that we can always compete
with the best single possible solution $\wopt$, even when faced with
semi-cyclic data.  But as discussed in Section \ref{sec:plur}, perhaps
in some extreme cases learning separate predictors $w^i$, each based
only on data from the $i$th component as in \eqref{eq:separatew}, might
be better.  That is, depending on the similarity and conflict between
components, the guarantees \eqref{eq:eachFi} might be better than that
of Theorem \ref{thm:pluravg}, at least for some of the predictors.  
But if we do not know in advance which
regime we are at, nor for which components we are better off with a separately-learned model (because they are very different from the others) and which are better off leveraging also the other components, can we still ensure the $m$ individual guarantees \eqref{eq:eachFi} and the guarantee of Theorem \ref{thm:pluravg} simultaneously?

Here we give a method for doing so, based on running both the single SGD chain \eqref{eq:SGD} and the separate SGD chains \eqref{eq:separatew} and carefully combining them using learned weights for each chain.
Let $q>0$ be the weight for the full SGD chain $w_t$~\eqref{eq:SGD}, which will be kept fixed throughout, and let $q^i_t>0$ be the weights assigned to the block-specific SGD chains $w_t^i$~\eqref{eq:separatew} on step $t$.
We learn the weights using a multiplicative update rule. 
At step $t=t(k,i,j)$, in which block $i$ is active, this update takes the form:
\begin{align*}
	q^{i}_{t+1} 
	&\gets 
	q^{i}_t \cdot \big( 1+\lrhedge\big(f(w_t,z_t)-f(w_t^{i},z_t)\big) \big) ;
	\\
	\forall ~ i' \neq i ,
	\qquad 
	q^{i'}_{t+1}
	&\gets
	q^{i'}_t
	~,
\end{align*}
where $\lrhedge>0$ is a learning rate (separate from those of the SGD chains).
Then, we let
$
	p_t 
	= 
	q_t^{i} \big/ (q_t^{i} + q)
	\in [0,1]
$
and choose between the full and block-specific SGD chains by:
\begin{align*}
\whedge_t \gets \begin{cases}
	w^{i}_t & \text{with probability $p_t$;}\\
	w_t & \text{otherwise.}
\end{cases}
\end{align*}
Finally, we obtain the final predictors via pluralistic averaging; namely, for each component $i$, we average only the iterate within the corresponding blocks:
\begin{align} \label{eq:wtilde-hedge}
    \tilde{\whedge}^i
    = 
    \frac{1}{Kn} \sum_{k=1}^K \sum_{j=1}^n \whedge_{t(k,i,j)}
    ~.
\end{align}
For the averaged solution $\tilde{\allwhedge} = (\tilde{\whedge}^1,\ldots,\tilde{\whedge}^m)$, we have the following.

\begin{theorem} \label{thm:hedging}
Set $\nu = \frac{1}{2B} \sqrt{(m/T)\log(BT/m)}$, $q = 1-\eta$, and initialize $q^i_1 = \eta$ for each $i$.
Then, for all $i$ we have
\begin{align*}
	\EE{F_i(\tilde{\whedge}^i)} 
	&\leq 
	F_i(\wopt^i) + 4\sqrt{\frac{B^2 \log(BT/m)}{T/m}}
	,
\intertext{and further, provided that $m \leq B^2Kn$,}
    \EE{\tF(\tilde{\allwhedge})}
    &\leq 
    F(\wopt) + 4\sqrt{\frac{B^2}{T}}
    .
\end{align*}
\end{theorem}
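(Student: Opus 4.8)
The plan is to recognize the multiplicative-weight scheme as a two-expert \emph{Prod}-style algorithm run separately inside each block, whose experts are the full chain $w_t$ and the block-specific chain $w_t^i$. Fix a block $i$ and let $\tau$ range over its $Kn$ active steps $t(k,i,j)$. Measuring gains relative to the full chain makes the full chain the zero-gain expert carrying the fixed weight $q$, while the block-specific chain carries weight $q^i_\tau$ and gain $g_\tau=f(w_\tau,z_\tau)-f(w^i_\tau,z_\tau)$; this reproduces exactly the stated update $q^i_{\tau+1}=q^i_\tau(1+\nu g_\tau)$. Since $q+q^i_1=(1-\eta)+\eta=1$, the total weight $W_\tau=q^i_\tau+q$ starts at $W_1=1$. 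The mixing probability is $p_\tau=q^i_\tau/W_\tau$, and the quantity the regret analysis controls is the coin-averaged loss $\hat\ell_\tau=p_\tau f(w^i_\tau,z_\tau)+(1-p_\tau)f(w_\tau,z_\tau)$, which equals $\EE{f(\whedge_\tau,z_\tau)\mid\text{history}}$ over the internal randomization.

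First I would derive two one-sided regret guarantees from a single potential argument on $W_\tau$. A short calculation gives $W_{\tau+1}=W_\tau(1+\nu p_\tau g_\tau)$, so $\ln(1+x)\le x$ telescopes to $\nu\sum_\tau p_\tau g_\tau\ge\ln W_{Kn+1}$. Lower-bounding $W_{Kn+1}$ by the block-specific weight and using $\ln(1+x)\ge x-x^2$ (valid since $2B\nu\le\tfrac12$ in the regime where the bound is meaningful) yields the regret against the block-specific chain, $\sum_\tau\hat\ell_\tau\le\sum_\tau f(w^i_\tau,z_\tau)+\tfrac{\ln(1/\eta)}{\nu}+\nu\sum_\tau g_\tau^2$; lower-bounding $W_{Kn+1}$ instead by the fixed full-chain weight $q=1-\eta$ yields the regret against the full chain, $\sum_\tau\hat\ell_\tau\le\sum_\tau f(w_\tau,z_\tau)+\tfrac{\ln(1/(1-\eta))}{\nu}$. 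Then $1$-Lipschitzness and $\norm{w}\le B$ give $\abs{g_\tau}\le 2B$, hence $\sum_\tau g_\tau^2\le 4B^2Kn$.

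Next I would convert each regret bound to a population statement exactly as in Theorem~\ref{thm:pluravg}. Convexity of $F_i$ gives $F_i(\tilde{\whedge}^i)\le\frac1{Kn}\sum_\tau F_i(\whedge_\tau)$, and since $z_\tau\sim\DD_i$ is independent of the iterates and of the coin, taking expectations shows $\EE{F_i(\whedge_\tau)}=\EE{\hat\ell_\tau}$, so $\EE{F_i(\tilde{\whedge}^i)}\le\frac1{Kn}\EE{\sum_\tau\hat\ell_\tau}$. For the first claim I combine the block-specific regret with the online-gradient regret \eqref{eq:OGDreg} of the chain $w^i_\tau$, which sees $Kn=T/m$ i.i.d.\ samples from $\DD_i$, to produce $F_i(\wopt^i)$ on the right, and then substitute $\nu=\frac1{2B}\sqrt{(m/T)\log(BT/m)}$ and $\eta\asymp m/(BT)$ so that both $\nu\sum_\tau g_\tau^2/(Kn)=4\nu B^2$ and $\ln(1/\eta)/(\nu Kn)$ are $O(\sqrt{B^2\log(BT/m)/(T/m)})$. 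For the second claim I sum the full-chain regret over all $m$ blocks; the per-block iterates $w_\tau$ concatenate into the single semi-cyclic chain, so I invoke the same regret $\sum_t f(w_t,z_t)\le\sum_t f(\wopt,z_t)+\sqrt{2B^2T}$ used in Theorem~\ref{thm:pluravg}, and after dividing by $T$ the only extra cost is $\frac{m\ln(1/(1-\eta))}{\nu T}$.

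The main obstacle is extracting \emph{both} comparator guarantees from the \emph{same} run, i.e.\ recognizing that the asymmetric update (fixed weight on the full chain, learned weight on the block-specific chain) is precisely a two-expert Prod instance whose potential argument delivers the two one-sided bounds simultaneously; the initialization $\eta$ must then be balanced so that $\ln(1/\eta)$ matches the $\log(BT/m)$ already paid inside $\nu$ (first claim) while $\ln(1/(1-\eta))\approx\eta$ stays negligible (second claim). The remaining delicate point is checking that the second-claim overhead $\frac{m\ln(1/(1-\eta))}{\nu T}$ is lower order: with $\eta\asymp m/(BT)$ it is $O\!\big(m^{3/2}/(T^{3/2}\sqrt{\log(BT/m)})\big)$, and the hypothesis $m\le B^2Kn$ (equivalently $m^2\le B^2T$), together with the trivial $m\le T$, is exactly what forces this below $\sqrt{B^2/T}$ and yields the stated constant.
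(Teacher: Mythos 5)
Your proposal is correct and follows essentially the same route as the paper: a Prod-type potential argument on the total weight that yields the two one-sided regret bounds simultaneously (constant regret against the anchored full chain via the fixed weight $q=1-\eta$, and $O\big(B\sqrt{(T/m)\log(BT/m)}\big)$ regret against the block-specific chain via $\ln(1+x)\geq x-x^2$), combined with the standard online-gradient-descent regret of each chain and a per-block online-to-batch conversion using conditional expectations and Jensen's inequality --- exactly the structure of the paper's Lemma on the hedged regret and its proof of the theorem. The only cosmetic differences are that you instantiate the two-expert case directly rather than the paper's general $(K{+}1)$-expert Prod lemma, and you decouple the initialization weight ($\eta\asymp m/(BT)$) from the learning rate $\nu$ where the paper sets them equal; both choices make the final bookkeeping (including the role of $m\leq B^2Kn$) go through in the same way.
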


The requirement that $m\leq B^2Kn$ is extremely mild, as we would generally expect a large number of iterations per block $n$, and only a mild number of blocks $m$, i.e.~$n\ll m$.

The proofs in this section use the following notation.
For all $i \in [m]$, let $S_i \subseteq [T]$ be the set of time steps where we got a sample from distribution $\DD_i$ (so that $\abs{S_i} = T/m$).
We first prove the following.

\begin{lemma} \label{lem:regret}
For each $i \in [m]$ we have
\begin{align*}
    \EE{ \sum_{t \in S_i} f(\whedge_t, z_t) - \sum_{t \in S_i} f(w_t, z_t) }
    &\leq
    2
    ~,
    \intertext{and}
    \EE{ \sum_{t \in S_i} f(\whedge_t, z_t) - \sum_{t \in S_i} f(w^i_t, z_t) }
    &\leq
    2B \sqrt{\frac{T}{m} \log\frac{BT}{m}}
    ~.
\end{align*}
Here, the expectation is taken w.r.t.~the $z_t$ as well as the internal randomization of the algorithm.
\end{lemma}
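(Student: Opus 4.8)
The plan is to recognize this as a two-expert instance of the multiplicative-weights (Hedge) method, run separately within each block but \emph{sharing} a common expert. Fix a block $i$ and restrict attention to the steps $t \in S_i$, since the weight $q^i_t$ only changes on these steps. On $S_i$ there are effectively two experts: expert~A, the full chain $w_t$, whose weight is frozen at $q$; and expert~B, the block chain $w^i_t$, whose weight $q^i_t$ evolves. Writing $g_t = f(w_t,z_t) - f(w^i_t,z_t)$ for the per-step advantage of A over B, the update $q^i_{t+1} = q^i_t(1+\nu g_t)$ is exactly the Hedge update that treats $g_t$ as the gain of expert~B and $0$ as the gain of expert~A (consistent with A's weight being frozen), and $p_t = q^i_t/(q^i_t+q)$ is the induced mixing probability. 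The first observation I would record is that the hedging coin flips never feed back into the SGD chains or the weights: $w_t$, $w^i_t$ and $q^i_t$ depend only on the samples $z_1,\dots,z_T$. Hence, conditioning on the samples, the only randomness left in $\whedge_t$ is its own coin, and its conditional expectation equals $p_t f(w^i_t,z_t) + (1-p_t) f(w_t,z_t)$. Consequently $\EE{\sum_{t\in S_i}(f(\whedge_t,z_t) - f(w_t,z_t))} = -\EE{\sum_{t\in S_i} p_t g_t}$ and $\EE{\sum_{t\in S_i}(f(\whedge_t,z_t) - f(w^i_t,z_t))} = \EE{\sum_{t\in S_i}(1-p_t)g_t}$, so both claims reduce to bounding the Hedge regret against expert~A and against expert~B respectively, and it suffices to prove the corresponding \emph{deterministic} inequalities for every fixed sample sequence.

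Next I would run the standard potential argument on the total weight $W_t = q + q^i_t$ (for the fixed block $i$, indexed along $S_i$), noting $W_1 = (1-\eta)+\eta = 1$. From $W_{t+1} = W_t(1+\nu p_t g_t)$ and $\log(1+x)\le x$, telescoping over $S_i$ gives the upper bound $\log W_{\mathrm{final}} \le \nu \sum_{t\in S_i} p_t g_t$. For the first inequality I combine this with $W_{\mathrm{final}} \ge q = 1-\eta$, which yields $\sum_{t\in S_i} p_t g_t \ge \log(1-\eta)/\nu$; with the stated (small) initialization $\eta$ and value of $\nu$ this is at least $-2$, so $\sum_{t\in S_i}(f(\whedge_t,z_t)-f(w_t,z_t)) = -\sum_{t\in S_i} p_t g_t \le 2$. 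For the second inequality I instead combine the upper bound with $W_{\mathrm{final}} \ge q^i_{\mathrm{final}} = \eta\prod_{t\in S_i}(1+\nu g_t)$; taking logs and using $\log(1+x)\ge x-x^2$ gives $\log\eta + \nu\sum_{t\in S_i} g_t - \nu^2\sum_{t\in S_i} g_t^2 \le \nu\sum_{t\in S_i} p_t g_t$. Rearranging,
\begin{equation*}
\sum_{t\in S_i}(1-p_t)g_t \;\le\; \frac{\log(1/\eta)}{\nu} + \nu\sum_{t\in S_i} g_t^2 .
\end{equation*}
Since $f$ is $1$-Lipschitz and the (projected) iterates satisfy $\norm{w_t},\norm{w^i_t}\le B$, we have $\abs{g_t}\le 2B$, so $\sum_{t\in S_i} g_t^2 \le 4B^2\abs{S_i} = 4B^2 T/m$; plugging in $\nu$ makes the second term $2B\sqrt{(T/m)\log(BT/m)}$, and the stated choice of $\eta$ keeps $\log(1/\eta) = O(\log(BT/m))$ so the first term is of the same order. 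Taking expectations over the samples gives both claims.

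Before the log inequalities can be applied I must check the range condition: I need $\nu\abs{g_t}$ bounded (say $\le 1/2$) for $\log(1+x)\ge x-x^2$ to hold. Here $\nu\abs{g_t}\le 2B\nu = \sqrt{(m/T)\log(BT/m)}$, which is below $1/2$ in the relevant regime $T \gtrsim m\log(BT/m)$, so the inequality is valid. The step I expect to require the most care is the bookkeeping that pins down the constants: the two-expert Hedge regret against B naturally splits into an initialization penalty $\log(1/\eta)/\nu$ and a second-order term $\nu\sum_{t\in S_i} g_t^2$, and obtaining the clean constant in the lemma requires choosing $\eta$ so as to balance this initialization penalty against the constant-$2$ guarantee of the first inequality (which forces $\eta$ small relative to $\nu$) while keeping $\log(1/\eta) = O(\log(BT/m))$. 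The conceptual content, however, is entirely in the reduction of the first paragraph: because the randomized hedge never influences the quantities being tracked, a purely deterministic online regret bound for two experts -- one of which carries a frozen weight -- is all that is needed.
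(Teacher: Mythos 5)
Your proposal is correct and takes essentially the same route as the paper: the paper's appendix also analyzes this \textsc{Prod}-style update (stated there for $K+1$ experts and specialized to $K=1$) via the total-weight potential, the inequalities $z - z^2 \leq \ln(1+z) \leq z$, and the observation that the frozen weight $q = 1-\eta$ of the full chain yields constant regret against it, with the same choice of tying the initialization parameter to the learning rate. Your explicit reduction of the randomized choice $u_t$ to the conditional expectation $p_t f(w^i_t,z_t) + (1-p_t) f(w_t,z_t)$ is exactly the (implicit) step by which the paper passes from its deterministic expert-regret corollary to the expectation bounds of the lemma.
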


The proof (in \cref{sec:hedge-proofs}) is based on classic analysis of the {\sc Prod} algorithm (\citealp{Cesa-Bianchi2007}; see also~\citealp{Even-Dar2008,sani2014exploiting}).
We can now prove the main theorem of this section.

\begin{proof}[Proof of \cref{thm:hedging}]
The proof follows from a combination of \cref{lem:regret} with regret bounds for the SGD chains \eqref{eq:SGD} and \eqref{eq:separatew}, viewed as trajectories of the Online Gradient Descent algorithm.
Standard regret bounds for the latter (\citealp{zinkevich2003online}; see also \citealp{shalev2012online,hazan2016introduction}) imply that for any sequence $z_1,\ldots,z_T$ and for any $\norm{\wopt} \leq B$, it holds that
\begin{align}
	\sum_{t=1}^T f(w_t, z_t) - \sum_{t=1}^T f(\wopt, z_t) 
	&\leq
	2B \sqrt{T} 
	~, \label{eq:ogd-all}
	\intertext{and, for each $i \in [m]$ and for any $\wopt^i$ such that $\norm{\wopt^i} \leq B$,}
	\sum_{t \in S_i} f(w^i_t, z_t) - \sum_{t=1}^T f(\wopt^i, z_t) 
	&\leq
	2B \sqrt{\frac{T}{m}}
	~. \label{eq:ogd-separate}
\end{align}
Now, fix $i \in [m]$; \cref{lem:regret} together with \cref{eq:ogd-separate} imply
\begin{align*}
    \frac{1}{T/m} \sum_{t \in S_i} \EE{ f(\whedge_t, z_t) - f(\wopt^i, z_t) }
    &\leq
    4\sqrt{\frac{B^2 \log(BT/m)}{T/m}}
    ~.
\end{align*}
Using the facts that $\Ee{t}{f(\whedge_t,z_t)} = F_i(\whedge_t)$ and $\Ee{t}{f(\wopt^i,z_t)} = F_i(\wopt^i)$ for any $t \in S_i$, we have
\begin{align*}
    \frac{1}{T/m} \sum_{t \in S_i} \big( \EE{ F_i(\whedge_t) } - F_i(\wopt^i) \big)
    \leq
    4\sqrt{\frac{B^2 \log(BT/m)}{T/m}}
    ~.
\end{align*}
Appealing to the convexity of $F_i$ and applying Jensen's inequality on the left-hand side to lower bound $\frac{1}{T/m} \sum_{t \in S_i} F_i(\whedge_t) \geq F_i(\tilde{\whedge}^i)$, we obtain
\begin{align*}
	\EE{ F_i(\tilde{\whedge}^i) } - F_i(\wopt^i)
	\leq
    4\sqrt{\frac{B^2 \log(BT/m)}{T/m}}
	~,
\end{align*}
which implies the first guarantee of the theorem.
For the second claim, the first bound of \cref{lem:regret} implies
\begin{align*}
    \sum_{i=1}^m \EE{ \sum_{t \in S_i} f(\whedge_t, z_t) } - \EE{ \sum_{t=1}^T f(w_t, z_t) }
    \leq
    2m
    .
\end{align*}
Summing this with \cref{eq:ogd-all} and dividing through by $T$ gives
\begin{align*}
    \frac{1}{m} \sum_{i=1}^m \frac{1}{T/m} \sum_{t \in S_i} \EE{ f(\whedge_t, z_t) - f(\wopt, z_t) }
    \leq
    \frac{2m}{T} + 2\sqrt{\frac{B^2}{T}}
    .
\end{align*}
Next, as before, substitute the conditional expectations and use Jensen's to lower bound the left-hand side; this yields
\begin{align*}
    \frac{1}{m} \sum_{i=1}^m \big( \EE{F_i(\tilde{\whedge}^i)} - F_i(\wopt) \big)
    \leq
    \frac{2m}{T} + 2\sqrt{\frac{B^2}{T}}
    .
\end{align*}
Recalling now the definitions $\tF(\tilde{\allwhedge}) = \frac{1}{m} \sum_{i=1}^m F_i(\tilde{\whedge}^i)$ and $F(\wopt) = \frac{1}{m} \sum_{i=1}^m F_i(\wopt)$, we have shown that
\begin{align*}
	\EE{ \tF(\tilde{\allwhedge}) } - F(\wopt)
	\leq
	\frac{2m}{T} + 2\sqrt{\frac{B^2}{T}}
	~.
\end{align*}
Noting $m \leq B\sqrt{T} = B\sqrt{Kmn}$ implies $m \leq B^2Kn$ conludes the proof.
\end{proof}

\section{Experiments}\label{sec:experiments}
To illustrate the challenges of optimizing on block-cyclic data, we train and evaluate a small logistic regression model on the Sentiment140 Twitter dataset \cite{go09}, a binary classification task over \num{1.6e6} examples. We split the data into training (90\%) and test (10\%) sets, partition it into $m=6$ components based on the timestamps (but not dates) of the Tweets: 12am - 4am, 4am - 8am, etc, then divide each component across $K=10$ cycles (days). For more details, see Appendix~\ref{sec:exp_details}. For simplicity, we keep the model architecture (linear bag of words classifier over top 1024 words) and minibatch size (128) fixed; we used a learning rate of $\eta = 0.464$ (determined through log-space grid search) except for the per-component SGD approach %
\eqref{eq:separatew} where $\eta = 1.0$ was optimal due to fewer iterations.

To illustrate the differences between the proposed methods, as well as to capture the diurnal variation expected in practical Federated Learning settings, we vary the label balance as a function of the time of day, ensuring the $\DD_i$ are somewhat different. In particular, we randomly drop negative posts from the midnight group so the overall rate is 2/3 positive sentiment, we randomly drop positive posts from the noon group so the overall rate is 1/3 positive sentiment, and we linearly interpolate for the other 4 components. For discussion, and a figure giving positive and negative label counts over components, see Appendix~\ref{sec:exp_labels}. We write $\hat{F}_i$ for the empirical accuracy on component $i$ of the test set, with $\hat{F}$ measuring the overall test accuracy.

We consider the following approaches:
\vspace{-0.15in}

\begin{enumerate} \itemsep -2pt
    \item A non-pluralistic \textbf{block-cyclic consensus} model, subject to the lower bounds of Section~\ref{sec:lower}. The accuracy of this procedure for day $k$ is given by the expected test-set accuracy of randomly picking $i \in \{1, \dots, m=6\}$, and evaluating $F(w_{t(k,i,n)})$, i.e. evaluating the model after completing a random block $i$ on each day $k$.
    \item The \textbf{per-component SGD} approach \eqref{eq:separatew}, where we run $m=6$ SGD chains, with each model training on only one block per day. For each day $k$, we evaluate $\frac{1}{m}\sum_i \hat{F}_i(w^i_{kn})$, where $w^i_{kn}$ is the model for component $i$ trained on the first $k$ blocks for component $i$.
    \item The \textbf{pluralistic single SGD chain} approach of Section~\ref{sec:pluravg}. The SGD chain is the same as for the consensus model, but evaluation is pluralistic as above: for each day $k$ we evaluate $\frac{1}{m}\sum_i \hat{F}_i(w_{t(k,i,n)})$, i.e.~for each test component we use the model at the end of the corresponding training block.
    \item An (impractical in real settings) \textbf{idealized i.i.d. model}, trained on the complete shuffled training set. For purposes of evaluation, we treat the data as if it had cyclic block structure and evaluate identically to the block-cyclic consensus model.
\end{enumerate}
See Appendix~\ref{sec:exp_eval} for more details on evaluation. Note that we do not use a model average, but rather take the most recent relevant iterate, matching how such models are typically deployed in practice.

\begin{figure}[t]
\centering
\includegraphics[width=8cm]{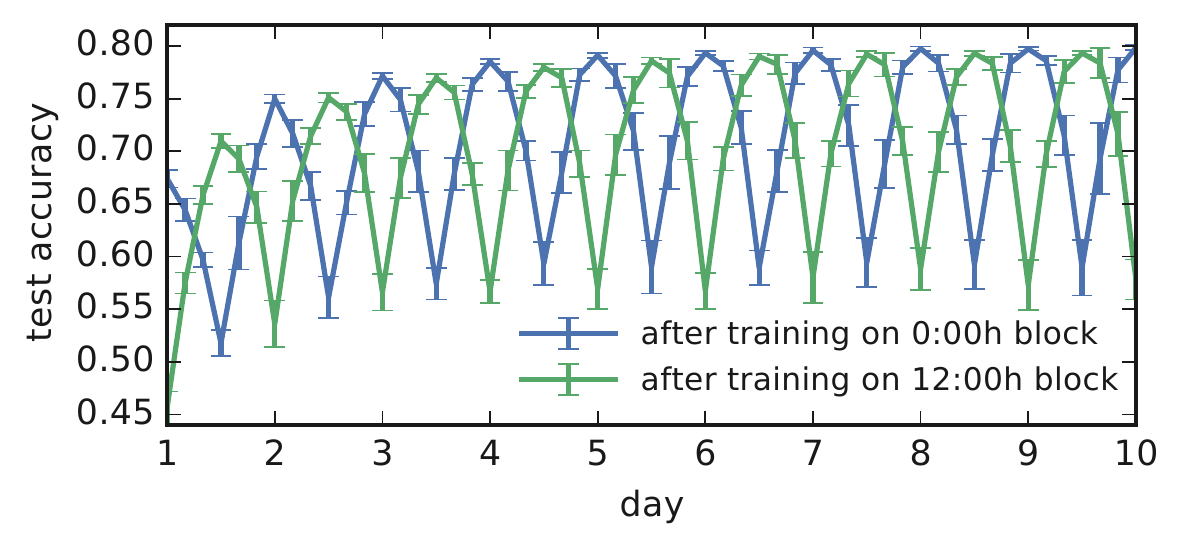}
\vspace{-0.4cm}
\caption{Accuracy of a model trained on block-cyclic data tested on two blocks of data - posts from the midnight component (12am-4am), and posts from the noon component (12pm-4pm). Mean and standard deviations are computed from ten training repetitions.}
\label{fig:experiment_results_cycles}
\end{figure}

Figure \ref{fig:experiment_results_cycles} illustrates how the block-cyclic structure of data can hurt accuracy of a consensus model. We plot how the accuracy on two test data components --- the midnight, and the noon component --- changes as a function of time of day when evaluated on the iterates of the SGD chain \eqref{eq:SGD}.  Training is quick to leverage the label bias and gravitate towards a model too specific for the block being processed, instead of finding a better predictive solution based on other features that would help predictions across all groups.

In Figure \ref{fig:experiment_results} we compare results from the four different training and test methods introduced above. 
First, pluralistic models with separate SGD chains take longer to converge because they are trained on less data. Depending on how long training proceeds, the size of the data set, and the number of components, these models may or may not surpass the idealized i.i.d. SGD and pluralistic single SGD chain models in accuracy.
Second, the pluralistic models from a single SGD chain consistently perform better than a single consensus model from the same chain.
Third, the performance of pluralistic models is en par with or better than the idealized i.i.d. SGD model, reflecting the fact that these models better match the target data distribution than a single model (i.i.d. or block cyclic consensus) can.

\section{Summary}

We considered training in the presence of block-cyclic data, showed that ignoring this source of data heterogeneity can be detrimental, but that a remarkably simple pluralistic approach can entirely resolve the problem and ensure, even in the worst case, the same guarantee as with homogeneous i.i.d.~data, and without any degradation based on the number of blocks.  When the component distributions are actually different, pluralism can outperform the ``ideal'' i.i.d.~baseline, as our experiments illustrate. An important lesson we want to stress is that pluralism can be a critical tool for dealing with heterogeneous data, by embracing this heterogeneity instead of wastefully insisting on a consensus solution. 

Dealing with heterogeneous data, users or clients can be difficult in many machine learning settings, and especially in Federated Learning where learning happens on varied devices with different characteristics, each handling its own distinct data.  Other heterogeneous aspects we can expect are variabilities in processing time and latency, amount of data per client, and client availability and reliability, which can all be correlated in different ways with different data components.  All these pose significant challenges when training.  We expect pluralistic solutions might be relevant in many of these situations.

\begin{figure}[t]
\centering
\vspace{-0.25cm}
\includegraphics[width=8cm]{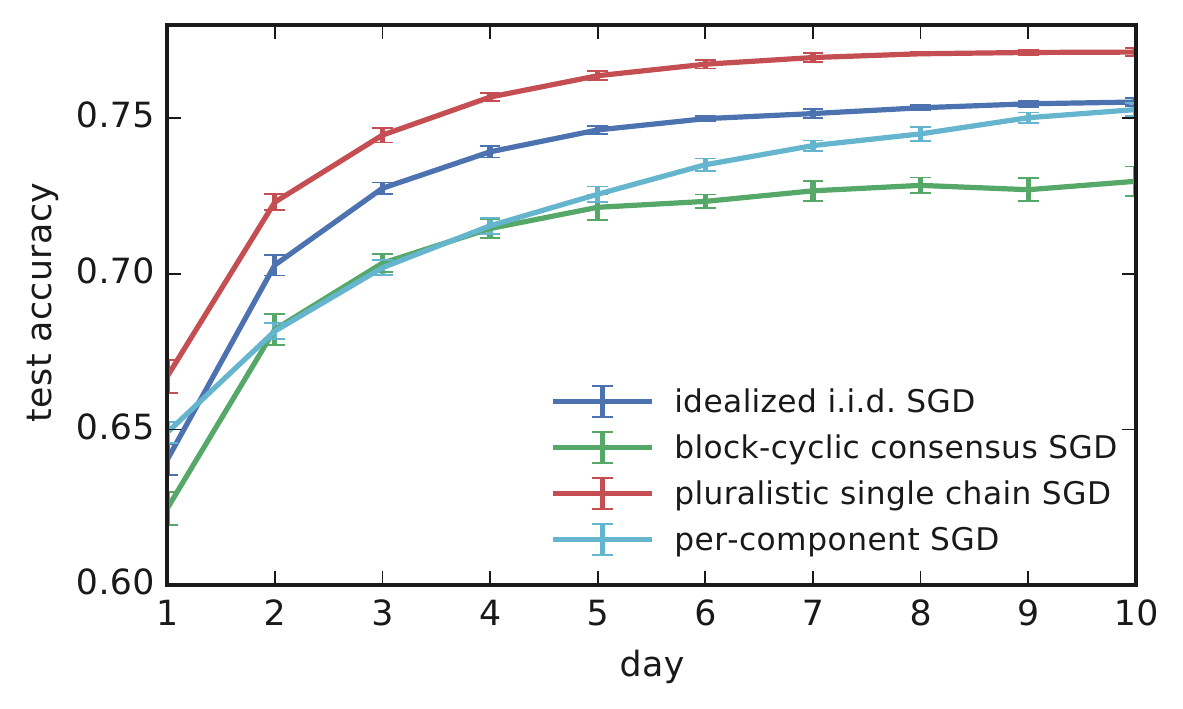}
\vspace{-0.4cm}
\caption{Comparison of various training and testing methodologies for a simple model on a sentiment classification task, using i.i.d. and block-cyclic data. Means and standard deviations shown are computed from ten training repetitions.}
\vspace{-0.5cm}
\label{fig:experiment_results}
\end{figure}

In this paper we considered only optimizing {\em convex} objectives using sequential SGD.  In many cases we are faced with non-convex objectives, as in when training deep neural networks.  As with many other training techniques, we expect this study of convex semi-cyclic training to be indicative also of non-convex scenarios (except that a random predictor would need to be used instead of an averaged predictor in the online-to-batch conversion) and also serve as a basis for further analysis of the non-convex case.  We are also interesting in analyzing the effect of block-cyclic data on methods beyond sequential SGD, and most prominently parallel SGD (aka Federated Averaging).  Unfortunately, there is currently no satisfying and tight analysis of parallel SGD even for i.i.d.~data, making a detailed analysis of semi-cyclicity for this method beyond our current reach.  Nevertheless, we 
again hope our analysis will be both indicative and serve as a basis for future exploration of parallel SGD and other distributed optimization approaches.

\section*{Acknowledgements}

NS was supported by NSF-BSF award 1718970 and a Google Faculty Research Award.

\setlength{\bibsep}{0.75ex plus 0.3ex}
\bibliography{pluralistic}

\begin{thebibliography}{28}
\providecommand{\natexlab}[1]{#1}
\providecommand{\url}[1]{\texttt{#1}}
\expandafter\ifx\csname urlstyle\endcsname\relax
  \providecommand{\doi}[1]{doi: #1}\else
  \providecommand{\doi}{doi: \begingroup \urlstyle{rm}\Url}\fi

\bibitem[Amit et~al.(2007)Amit, Fink, Srebro, and Ullman]{amit2007uncovering}
Amit, Y., Fink, M., Srebro, N., and Ullman, S.
\newblock Uncovering shared structures in multiclass classification.
\newblock In \emph{Proceedings of the 24th international conference on Machine
  learning}, pp.\  17--24. ACM, 2007.

\bibitem[Ando \& Zhang(2005)Ando and Zhang]{ando2005framework}
Ando, R.~K. and Zhang, T.
\newblock A framework for learning predictive structures from multiple tasks
  and unlabeled data.
\newblock \emph{Journal of Machine Learning Research}, 6\penalty0
  (Nov):\penalty0 1817--1853, 2005.

\bibitem[Argyriou et~al.(2008)Argyriou, Evgeniou, and
  Pontil]{argyriou2008convex}
Argyriou, A., Evgeniou, T., and Pontil, M.
\newblock Convex multi-task feature learning.
\newblock \emph{Machine Learning}, 73\penalty0 (3):\penalty0 243--272, 2008.

\bibitem[Arjevani \& Shamir(2015)Arjevani and
  Shamir]{arjevani2015communication}
Arjevani, Y. and Shamir, O.
\newblock Communication complexity of distributed convex learning and
  optimization.
\newblock In \emph{Advances in neural information processing systems 28}, pp.\
  1756--1764, 2015.

\bibitem[Baxter(2000)]{baxter2000model}
Baxter, J.
\newblock A model of inductive bias learning.
\newblock \emph{Journal of Artificial Intelligence Research}, 12:\penalty0
  149--198, 2000.

\bibitem[Ben-David \& Schuller(2003)Ben-David and Schuller]{ben2003exploiting}
Ben-David, S. and Schuller, R.
\newblock Exploiting task relatedness for multiple task learning.
\newblock In \emph{Learning Theory and Kernel Machines}, pp.\  567--580.
  Springer, 2003.

\bibitem[Bonawitz et~al.(2019)Bonawitz, Eichner, Grieskamp, Huba, Ingerman,
  Ivanov, Kiddon, Konecn{\'{y}}, Mazzocchi, McMahan, Overveldt, Petrou, Ramage,
  and Roselander]{bonawitz19sysml}
Bonawitz, K., Eichner, H., Grieskamp, W., Huba, D., Ingerman, A., Ivanov, V.,
  Kiddon, C., Konecn{\'{y}}, J., Mazzocchi, S., McMahan, H.~B., Overveldt,
  T.~V., Petrou, D., Ramage, D., and Roselander, J.
\newblock Towards federated learning at scale: System design.
\newblock In \emph{Conference on Systems and Machine Learning (SysML)}, 2019.
\newblock URL \url{http://arxiv.org/abs/1902.01046}.

\bibitem[Carmon et~al.(2017)Carmon, Duchi, Hinder, and
  Sidford]{carmon2017lower}
Carmon, Y., Duchi, J.~C., Hinder, O., and Sidford, A.
\newblock Lower bounds for finding stationary points i.
\newblock \emph{arXiv preprint arXiv:1710.11606}, 2017.

\bibitem[Caruana(1997)]{caruana1997multitask}
Caruana, R.
\newblock Multitask learning.
\newblock \emph{Machine learning}, 28\penalty0 (1):\penalty0 41--75, 1997.

\bibitem[Cesa-Bianchi et~al.(2007)Cesa-Bianchi, Mansour, and
  Stoltz]{Cesa-Bianchi2007}
Cesa-Bianchi, N., Mansour, Y., and Stoltz, G.
\newblock Improved second-order bounds for prediction with expert advice.
\newblock \emph{Machine Learning}, 66\penalty0 (2):\penalty0 321--352, Mar
  2007.
\newblock ISSN 1573-0565.
\newblock \doi{10.1007/s10994-006-5001-7}.
\newblock URL \url{https://doi.org/10.1007/s10994-006-5001-7}.

\bibitem[Even-Dar et~al.(2008)Even-Dar, Kearns, Mansour, and
  Wortman]{Even-Dar2008}
Even-Dar, E., Kearns, M., Mansour, Y., and Wortman, J.
\newblock Regret to the best vs. regret to the average.
\newblock \emph{Machine Learning}, 72\penalty0 (1):\penalty0 21--37, Aug 2008.
\newblock ISSN 1573-0565.
\newblock \doi{10.1007/s10994-008-5060-z}.
\newblock URL \url{https://doi.org/10.1007/s10994-008-5060-z}.

\bibitem[Evgeniou \& Pontil(2004)Evgeniou and Pontil]{evgeniou2004regularized}
Evgeniou, T. and Pontil, M.
\newblock Regularized multi--task learning.
\newblock In \emph{Proceedings of the tenth ACM SIGKDD international conference
  on Knowledge discovery and data mining}, pp.\  109--117. ACM, 2004.

\bibitem[Evgeniou et~al.(2005)Evgeniou, Micchelli, and
  Pontil]{evgeniou2005learning}
Evgeniou, T., Micchelli, C.~A., and Pontil, M.
\newblock Learning multiple tasks with kernel methods.
\newblock \emph{Journal of Machine Learning Research}, 6\penalty0
  (Apr):\penalty0 615--637, 2005.

\bibitem[Go et~al.(2009)Go, Bhayani, and Huang]{go09}
Go, A., Bhayani, R., and Huang, L.
\newblock Twitter sentiment classification using distant supervision.
\newblock \emph{CS224N Project Report, Stanford}, 150, 01 2009.
\newblock Data from \url{http://help.sentiment140.com/for-students}.

\bibitem[Hard et~al.(2018)Hard, Rao, Mathews, Beaufays, Augenstein, Eichner,
  Kiddon, and Ramage]{gboard}
Hard, A., Rao, K., Mathews, R., Beaufays, F., Augenstein, S., Eichner, H.,
  Kiddon, C., and Ramage, D.
\newblock Federated learning for mobile keyboard prediction.
\newblock \emph{arXiv preprint 1811.03604}, 2018.

\bibitem[Hazan(2016)]{hazan2016introduction}
Hazan, E.
\newblock Introduction to online convex optimization.
\newblock \emph{Foundations and Trends{\textregistered} in Optimization},
  2\penalty0 (3-4):\penalty0 157--325, 2016.

\bibitem[Kone{\v{c}}n{\'y} et~al.(2016)Kone{\v{c}}n{\'y}, McMahan, Ramage, and
  Richt{\'a}rik]{FL3}
Kone{\v{c}}n{\'y}, J., McMahan, H.~B., Ramage, D., and Richt{\'a}rik, P.
\newblock Federated optimization: Distributed machine learning for on-device
  intelligence.
\newblock \emph{arXiv preprint arXiv:1610.02527}, 2016.

\bibitem[Maurer(2006)]{maurer2006bounds}
Maurer, A.
\newblock Bounds for linear multi-task learning.
\newblock \emph{Journal of Machine Learning Research}, 7\penalty0
  (Jan):\penalty0 117--139, 2006.

\bibitem[McMahan \& Ramage(2017)McMahan and Ramage]{FL_BLOG}
McMahan, H.~B. and Ramage, D.
\newblock Federated learning: Collaborative machine learning without
  centralized training data, April 2017.
\newblock URL
  \url{https://ai.googleblog.com/2017/04/federated-learning-collaborative.html}.
\newblock Google AI Blog.

\bibitem[McMahan et~al.(2017)McMahan, Moore, Ramage, Hampson, and y~Arcas]{FL4}
McMahan, H.~B., Moore, E., Ramage, D., Hampson, S., and y~Arcas, B.~A.
\newblock Communication-efficient learning of deep networks from decentralized
  data.
\newblock In \emph{Proceedings of the 20th International Conference on
  Artificial Intelligence and Statistics}, pp.\  1273--1282, 2017.

\bibitem[Sani et~al.(2014)Sani, Neu, and Lazaric]{sani2014exploiting}
Sani, A., Neu, G., and Lazaric, A.
\newblock Exploiting easy data in online optimization.
\newblock In \emph{Advances in Neural Information Processing Systems}, pp.\
  810--818, 2014.

\bibitem[Shalev-Shwartz(2012)]{shalev2012online}
Shalev-Shwartz, S.
\newblock Online learning and online convex optimization.
\newblock \emph{Foundations and Trends{\textregistered} in Machine Learning},
  4\penalty0 (2):\penalty0 107--194, 2012.

\bibitem[Shalev-Shwartz et~al.(2009)Shalev-Shwartz, Shamir, Srebro, and
  Sridharan]{shalev2009stochastic}
Shalev-Shwartz, S., Shamir, O., Srebro, N., and Sridharan, K.
\newblock Stochastic convex optimization.
\newblock In \emph{Proceedings of the 22nd Annual Conference on Learning Theory
  (COLT)}, 2009.

\bibitem[Turlach et~al.(2005)Turlach, Venables, and
  Wright]{turlach2005simultaneous}
Turlach, B.~A., Venables, W.~N., and Wright, S.~J.
\newblock Simultaneous variable selection.
\newblock \emph{Technometrics}, 47\penalty0 (3):\penalty0 349--363, 2005.

\bibitem[Woodworth \& Srebro(2016)Woodworth and Srebro]{woodworth2016tight}
Woodworth, B. and Srebro, N.
\newblock Tight complexity bounds for optimizing composite objectives.
\newblock In \emph{Advances in neural information processing systems}, pp.\
  3639--3647, 2016.

\bibitem[Woodworth \& Srebro(2017)Woodworth and Srebro]{woodworth2017lower}
Woodworth, B. and Srebro, N.
\newblock Lower bound for randomized first order convex optimization.
\newblock \emph{arXiv preprint arXiv:1709.03594}, 2017.

\bibitem[Woodworth et~al.(2018)Woodworth, Wang, Smith, McMahan, and
  Srebro]{woodworth2018graph}
Woodworth, B., Wang, J., Smith, A., McMahan, B., and Srebro, N.
\newblock Graph oracle models, lower bounds, and gaps for parallel stochastic
  optimization.
\newblock In \emph{Advances in Neural Information Processing Systems 31}, pp.\
  8505--8515. 2018.

\bibitem[Zinkevich(2003)]{zinkevich2003online}
Zinkevich, M.
\newblock Online convex programming and generalized infinitesimal gradient
  ascent.
\newblock In \emph{Proceedings of the 20th International Conference on Machine
  Learning (ICML)}, pp.\  928--936, 2003.

\end{thebibliography}
\bibliographystyle{icml2019}

\clearpage
\appendix
\newtheorem{cor}{Corollary}

\section{Deferred Proofs from \cref{sec:hedge}}
\label{sec:hedge-proofs}

In this section, we give a proof of Lemma~\ref{lem:regret}. 
Such a result has been previously shown in~\citet{Even-Dar2008,sani2014exploiting}, building on a lemma of~\citet{Cesa-Bianchi2007}. 
We give a full proof for completeness. We start with a simple Lemma.

\begin{lemma} \label{lem:log}
For any $z > - \frac 1 2$, 
$$z - z^2 \leq \ln(1+z) \leq z.$$
\end{lemma}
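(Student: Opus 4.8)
The two inequalities are elementary calculus facts, so the plan is to treat each side separately by reducing it to the monotonicity of an auxiliary function whose minimum can be read off directly. For the upper bound $\ln(1+z)\le z$, which is the classical logarithmic inequality valid on all of $(-1,\infty)$, I would set $h(z)=z-\ln(1+z)$, note $h(0)=0$, and compute $h'(z)=1-\tfrac{1}{1+z}=\tfrac{z}{1+z}$. Since $1+z>0$ on the domain, $h'$ has the sign of $z$, so $h$ is decreasing for $z<0$ and increasing for $z>0$; its global minimum is therefore $h(0)=0$, giving $h\ge 0$ and hence $\ln(1+z)\le z$.

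For the lower bound $z-z^2\le\ln(1+z)$, which is where the hypothesis $z>-\tfrac12$ becomes essential, I would introduce $g(z)=\ln(1+z)-z+z^2$ and again prove $g\ge 0$ by locating its minimum. Differentiating gives $g'(z)=\tfrac{1}{1+z}-1+2z$, and putting everything over the common denominator yields the clean factorization $g'(z)=\tfrac{z(1+2z)}{1+z}$. The key observation is that when $z>-\tfrac12$ both factors $1+2z$ and $1+z$ are strictly positive, so the sign of $g'(z)$ coincides with the sign of $z$. Consequently $g$ is decreasing on $(-\tfrac12,0)$ and increasing on $(0,\infty)$, its minimum over the admissible range is attained at $z=0$ with $g(0)=0$, and therefore $g(z)\ge 0$ for every $z>-\tfrac12$, which is exactly the desired lower bound.

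There is no genuine obstacle here; the only care needed is the algebraic simplification of $g'$ and the sign bookkeeping, since the whole purpose of the stated range $z>-\tfrac12$ is precisely to keep the factor $(1+2z)$ positive so that $g'$ changes sign only at the origin. I would close by remarking that this also explains the necessity of the restriction: once $z$ drops below $-\tfrac12$ the factor $1+2z$ turns negative and $g$ begins to decrease again, and indeed $\ln(1+z)\to-\infty$ as $z\to-1^+$ while $z-z^2$ stays bounded, so the lower bound must fail there.
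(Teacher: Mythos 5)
Your proposal is correct and takes essentially the same route as the paper: for the lower bound you use the same auxiliary function $\ln(1+z)-z+z^2$, the same factorization of its derivative as $\tfrac{z(1+2z)}{1+z}$, and the same monotonicity argument (decreasing on $(-\tfrac12,0)$, increasing on $(0,\infty)$, minimum $0$ at $z=0$). The only cosmetic difference is that you prove the upper bound $\ln(1+z)\le z$ by an explicit derivative computation, whereas the paper simply cites it as standard (following from concavity of the logarithm).
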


\begin{proof}
The upper bound on $\ln(1+z)$ is standard, and follows e.g. by the concavity of the log function. For the lower bound, write
\begin{align*}
  f(z) = \ln(1+z) - (z - z^2).
\end{align*}
Then $f'(z) = \frac{1}{1+z} - 1 + 2z = \frac{z(1+2z)}{(1+z)}$. Thus $f$ is decreasing in $(-\frac 1 2, 0)$ and increasing in $(0,\infty)$. Thus in the range $(-\frac 1 2, \infty)$, $f(z) \geq f(0) = 0$.
\end{proof}

We consider the more general case of $K+1$ experts with losses in $[-M,M]$, and a chosen expert $0$, with respect to which we want constant regret. 
We consider the {\sc Prod} Algorithm that starts out with initial weights:
\begin{align*}
q^0_1 = 1-\eta; \,\,\,\,\,\,\, q^i_1 = \eta/K \,\,\,\,\,\,\forall i=1..K.
\end{align*}

At time step $t$, it picks an expert $j_t$ with probability proportional to $q^j_t$:
\begin{align*}
p^i_t &= q^i_{t} \big/ (\sum_{j=0}^K q^j_t).
\end{align*}
Finally, on receiving the loss function $\ell_t$, it updates the weights according to the multiplicative update
    	\begin{align*}
    		q^{i}_{t+1}
    		=
    		q^{i}_t \cdot \big(1+\eta\big(\ell_t(0) - \ell_t(i)\big)\big)
    		\qquad
    		\forall i=0..K
    	\end{align*}

\begin{lemma}
Assume that $0 < \eta \leq 1/(4M)$.
Then this {\sc Prod} algorithm achieves
\begin{align*}
  \sum_{t=1}^T p_t^{j_t} \ell_t(j_t) - \sum_{t=1}^T \ell_t(j)
  &\leq 4\eta M^2 T + \frac{1}{\eta} \ln \frac{K}{\eta}
  \intertext{for all $j=1,\ldots,K$, and}
  \sum_{t=1}^T p_t^{j_t} \ell_t(j_t) - \sum_{t=1}^T \ell_t(0)
  &\leq 1 + \eta
  .
\end{align*}
\end{lemma}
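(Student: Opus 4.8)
The plan is to run the standard potential/weight-tracking argument for {\sc Prod}, using the total weight $W_t = \sum_{j=0}^{K} q^j_t$ as the potential. First I would record that $W_1 = (1-\eta) + K\cdot(\eta/K) = 1$, and that the algorithm's expected per-step loss is $\hat\ell_t := \sum_{j=0}^K p^j_t \ell_t(j) = \frac{1}{W_t}\sum_{j=0}^K q^j_t \ell_t(j)$, which is exactly the quantity appearing on the left-hand side of both bounds (so $\sum_t p^{j_t}_t\ell_t(j_t)$ is to be read as $\sum_t \hat\ell_t$). Plugging the multiplicative update into the definition of $W_{t+1}$ and factoring out $W_t$ gives the clean recursion
\[
  W_{t+1} = W_t\big(1 + \eta(\ell_t(0) - \hat\ell_t)\big).
\]
Taking logarithms, telescoping, and applying the upper bound $\ln(1+z)\le z$ from \cref{lem:log} (valid here since $\eta(\ell_t(0)-\hat\ell_t) \ge -2M\eta \ge -\tfrac12 > -1$) then yields the master inequality $\ln W_{T+1} \le \eta\sum_{t=1}^T(\ell_t(0) - \hat\ell_t)$.

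Next I would lower-bound $\ln W_{T+1}$ in two different ways, one per claimed bound. For the comparison with expert $0$, the key observation is that its update factor is $1+\eta(\ell_t(0)-\ell_t(0)) = 1$, so its weight never changes: $q^0_t \equiv 1-\eta$, whence $W_{T+1} \ge 1-\eta$. Combining $\ln(1-\eta) \le \eta\sum_t(\ell_t(0)-\hat\ell_t)$ with the elementary inequality $-\tfrac1\eta\ln(1-\eta) \le 1+\eta$ gives $\sum_t(\hat\ell_t - \ell_t(0)) \le 1+\eta$. For the comparison with an expert $j\ge 1$, I would instead use $W_{T+1} \ge q^j_{T+1}$ and expand $\ln q^j_{T+1} = \ln(\eta/K) + \sum_t \ln(1+\eta(\ell_t(0)-\ell_t(j)))$; applying the lower bound $\ln(1+z)\ge z - z^2$ of \cref{lem:log} together with $|\eta(\ell_t(0)-\ell_t(j))| \le 2M\eta \le \tfrac12$ produces $\ln W_{T+1} \ge -\ln(K/\eta) + \eta\sum_t(\ell_t(0)-\ell_t(j)) - 4\eta^2M^2T$. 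Chaining with the master inequality, the common $\eta\sum_t\ell_t(0)$ terms cancel, and dividing by $\eta$ leaves $\sum_t(\hat\ell_t - \ell_t(j)) \le 4\eta M^2 T + \tfrac1\eta\ln\tfrac K\eta$.

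The bulk of the work is purely mechanical telescoping and sign bookkeeping; the only two places requiring care are (i) verifying that the increments $z = \eta(\ell_t(0)-\ell_t(\cdot))$ stay in the regime $z \ge -\tfrac12$ where \cref{lem:log} applies, which is exactly what the hypothesis $\eta \le 1/(4M)$ and the loss range $[-M,M]$ guarantee (and which controls the quadratic error term by $4\eta^2M^2T$), and (ii) the scalar inequality $-\tfrac1\eta\ln(1-\eta)\le 1+\eta$, which I expect to be the main obstacle in that it is the one non-telescoping estimate; it follows from the series $-\ln(1-\eta) = \sum_{k\ge1}\eta^k/k$ by bounding the tail geometrically, and holds in the small-$\eta$ regime relevant to the application. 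Identifying that expert $0$'s weight is frozen is the conceptual step that makes the constant-regret bound fall out essentially for free.
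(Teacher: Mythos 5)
Your proof is correct and follows essentially the same potential argument as the paper's: telescope $\ln W_{T+1}$ using $\ln(1+z)\le z$ for the upper bound, lower-bound it through a single expert's weight using $\ln(1+z)\ge z-z^2$, and exploit that expert $0$'s weight is frozen to get the constant regret term. The only cosmetic difference is that your step (ii), the inequality $-\tfrac{1}{\eta}\ln(1-\eta)\le 1+\eta$, needs no separate series argument: it is exactly the lower bound of \cref{lem:log} applied at $z=-\eta$ (valid in the same $\eta<\tfrac12$ regime the paper implicitly relies on), which is precisely how the paper handles it.
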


\begin{proof}
Let $Q_t = \sum_{j=0}^K q^j_t$ and let $\Delta^j_t = \ell_t(0) - \ell_t(j)$ denote the gap between the chosen expert and expert $j$ at step $t$. Note that $|\Delta^j_t| \leq 2M$.

On the one hand,
\begin{align*}
  \ln \frac{Q_{T+1}}{Q_1} &= \sum_{t=1}^{T} \ln \frac{Q_{t+1}}{Q_t}\\
  &= \sum_{t=1}^{T} \ln\Bigg( \frac{1}{Q_t} \sum_{j=0}^K q^j_t(1+\eta\Delta^j_t) \Bigg)\\
  &= \sum_{t=1}^T \ln \sum_{j=0}^K p^j_t(1+\eta\Delta^j_t)\\
  &= \sum_{t=1}^T \ln\Bigg( 1 + \eta \sum_{j=0}^K p^j_t\Delta^j_t \Bigg)\\
  &\leq \sum_{t=1}^T \eta \sum_{j=0}^K p^j_t\Delta^j_t\\
  &= \eta \sum_{t=1}^T \ell_t(0) - \eta \sum_{t=1}^T p_t^{j_t} \ell_t(j_t)
  .
\end{align*}
On the other hand, for any $j$,
\begin{align*}
    \ln \frac{Q_{T+1}}{Q_1} 
    &\geq \ln \frac{q^j_{T+1}}{q^j_1} + \ln \frac {q^j_1}{Q_1}\\
    &= \sum_{t=1}^T \ln \frac{q^j_{t+1}}{q^j_t} + \ln \frac {q^j_1}{Q_1} \\
    &= \sum_{t=1}^T \ln (1+ \eta\Delta^j_t) + \ln \frac {q^j_1}{Q_1} \\
    &\geq \sum_{t=1}^T (\eta\Delta^j_t - (\eta \Delta^j_t)^2) + \ln \frac {q^j_1}{Q_1}\\
    &\geq \eta \sum_{t=1}^T (\ell_t(0) - \ell_t(j)) - 4\eta^2 M^2 T + \ln \frac {q^j_1}{Q_1}
    ,
\end{align*}
where the middle inequality holds since $|\eta\Delta^j_t| \leq 1/2$.
It follows that for $j \neq 0$,
\begin{align*}
    \sum_{t=1}^T p_t^{j_t} \ell_t(j_t) - \sum_{t=1}^T \ell_t(j)
    \leq 
    4\eta M^2 T + \frac{1}{\eta} \ln \frac{K}{\eta}
    .
\end{align*}
Moreover, since $q^0_t$ does not change during the algorithm, we also have, using the lower bound in \cref{lem:log}, that
\begin{align*}
  \ln \frac{Q_{T+1}}{Q_1} 
  \geq 
  \ln \frac {q^0_1}{Q_1}
  =
  \ln(1-\eta)
  \geq
  -\eta-\eta^2
  .
\end{align*}
This implies that
\begin{align*}
    \sum_{t=1}^T p_t^{j_t} \ell_t(j_t) - \sum_{t=1}^T \ell_t(0)
    &\leq 
    1+\eta 
    .
    \qedhere
\end{align*}
\end{proof}

Optimizing parameters, we get the corollary:

\begin{cor}
Set $\eta = \frac{1}{2M} \sqrt{\ln(KMT)/T}$ and assume that that $M \geq 1$ and that $T$ is large enough so that $\eta \leq 1/(4M)$. 
Then the algorithm achieves the following regret bounds:
\begin{align*}
    \sum_{t=1}^T p_t^{j_t} \ell_t(j_t)
    &\leq \sum_{t=1}^T \ell_t(j) + 4M \sqrt{T\ln(KMT)}
    \intertext{for all $j=1,\ldots,K$, and}
    \sum_{t=1}^T p_t^{j_t} \ell_t(j_t)
    &\leq \sum_{t=1}^T \ell_t(0) + 2.
\end{align*}
\end{cor}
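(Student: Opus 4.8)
The plan is entirely computational: substitute the prescribed $\eta = \frac{1}{2M}\sqrt{\ln(KMT)/T}$ into the two regret bounds of the preceding lemma and check that each resulting expression is dominated by the claimed right-hand side. No new structural idea is needed beyond careful bookkeeping, since the feasibility hypothesis $\eta \leq 1/(4M)$ is exactly what licenses the application of that lemma.

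First I would invoke the lemma to obtain the two raw bounds, namely $\sum_{t} p_t^{j_t}\ell_t(j_t) - \sum_{t}\ell_t(j) \leq 4\eta M^2 T + \frac{1}{\eta}\ln\frac{K}{\eta}$ for each $j = 1,\ldots,K$, together with $\sum_{t} p_t^{j_t}\ell_t(j_t) - \sum_{t}\ell_t(0) \leq 1+\eta$. For the first bound, the leading term is handled by a direct substitution: $4\eta M^2 T = 4 \cdot \frac{1}{2M}\sqrt{\ln(KMT)/T}\cdot M^2 T = 2M\sqrt{T\ln(KMT)}$, which already accounts for exactly half of the target $4M\sqrt{T\ln(KMT)}$. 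It then remains to show that the second term $\frac{1}{\eta}\ln\frac{K}{\eta}$ contributes at most the other $2M\sqrt{T\ln(KMT)}$. Writing $\frac{1}{\eta} = 2M\sqrt{T/\ln(KMT)}$, this reduces to establishing the logarithmic comparison $\ln\frac{K}{\eta} \leq \ln(KMT)$, since then $\frac{1}{\eta}\ln\frac{K}{\eta} \leq 2M\sqrt{T/\ln(KMT)}\cdot\ln(KMT) = 2M\sqrt{T\ln(KMT)}$, and adding the two halves yields the first claim.

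The comparison $\ln\frac{K}{\eta}\leq\ln(KMT)$ is equivalent to $1/\eta \leq MT$, which after squaring and simplifying is just the condition $T\ln(KMT) \geq 4$; this is the one slightly delicate point, and it is where the ``$T$ large enough'' hypothesis is used. Concretely, the standing assumption $\eta \leq 1/(4M)$ unwinds to $T \geq 4\ln(KMT)$, and combined with $M \geq 1$ (so that $\ln(KMT) \geq 1$ once $T$ is at all large) this forces $T \geq 4$ and hence $T\ln(KMT)\geq 4$, as required. The second bound of the corollary is then immediate: since $\eta \leq 1/(4M) \leq 1/4$ by $M \geq 1$, we have $1 + \eta \leq 2$. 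I expect the only genuine obstacle to be tracking this lower bound on $T$ through the logarithmic comparison; every other step is direct substitution and algebraic simplification.
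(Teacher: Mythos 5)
Your proposal is correct and is exactly the route the paper takes: the paper's entire proof is the phrase ``Optimizing parameters, we get the corollary,'' i.e.\ substitute the prescribed $\eta$ into the two bounds of the preceding lemma, and you have simply supplied the omitted bookkeeping (the substitution $4\eta M^2T = 2M\sqrt{T\ln(KMT)}$, the reduction of $\ln(K/\eta)\le\ln(KMT)$ to $T\ln(KMT)\ge 4$ via $1/\eta\le MT$, and the trivial bound $1+\eta\le 2$). Your handling of the ``$T$ large enough'' hypothesis is the right reading of that assumption, so the argument stands.
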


Lemma~\ref{lem:regret} follows from the $K=1$ version of this corollary, where the two experts are the algorithms $w^j_t$ and $w_t$.

\section{Experimental Details} \label{sec:exp_details}

\subsection{Dataset} The sentiment140 dataset set~\citep{go09} was collected by querying Twitter (a popular social network) for posts (a.k.a.~Tweets) containing positive and negative emoticons, and labeling the retrieved posts (with emoticons removed) as positive and negative sentiment, respectively. 

The data sets used for the above scenarios are created by first shuffling the data randomly and splitting it into a training (90\%, or $1,440,000$ examples) and test set (10\%, or $160,000$ examples). This data set is used as-is for training and evaluating the \textit{idealized i.i.d.} model. For the other scenarios trained on block-cyclic data, we group the shuffled training set and test set into $m=6$ blocks each by the time of day of the post (e.g. midnight block: posts from 12am - 4am; noon block: posts from 12pm - 4pm). This results in blocks of varying sizes, on average $1,440,000/6=240,000$ (training) and $160,000/6=24,000$ (testing) examples, respectively.

We simulate $K=10$ cycles (days). Observing that one pass (epoch) over the entire i.i.d.~data set was sufficient for convergence of our relatively small model, this results in $mn=1,440,000/10$ training examples per day, or $n=1,440,000/10/6=24,000$ training examples per day per block. 

\subsection{Artificially balanced labels} \label{sec:exp_labels}

\begin{figure}[t]
\centering
\includegraphics[width=8cm]{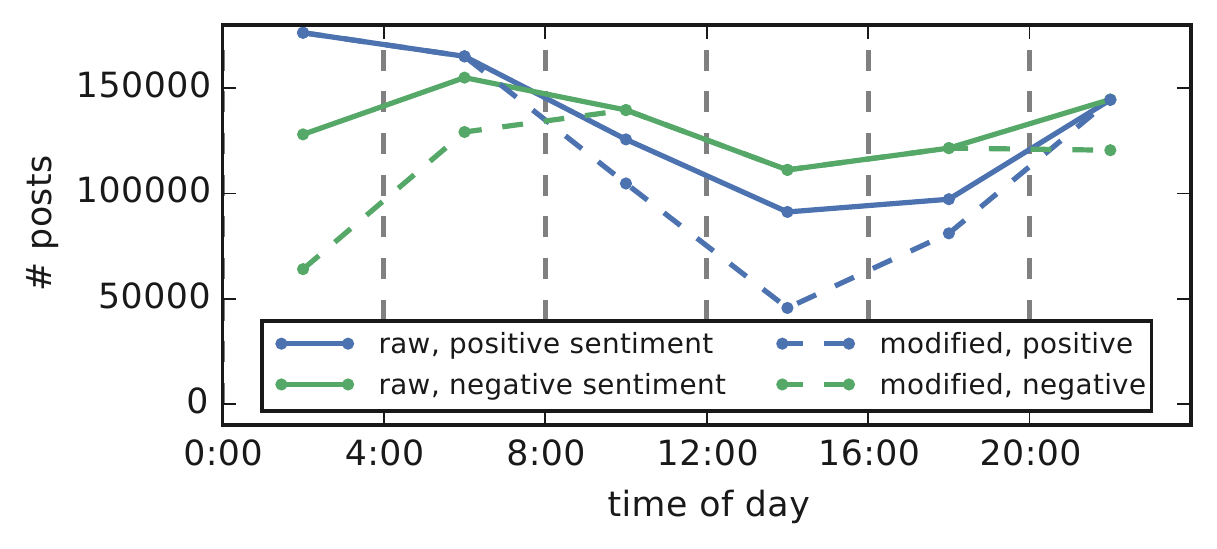}
\caption{Sentiment bias as a function of day time in the Sentiment140 dataset. For experiments in this paper, we introduced additional time-of-day dependent label skew to allow for a clearer illustration of how pluralistic approaches differ.}
\label{fig:sentiment140_bias}
\end{figure}

The raw data grouped by time of day exhibits some block-cyclic characteristics; for instance, positive tweets are slightly more likely at night time hours than day time hours (see Figure \ref{fig:sentiment140_bias}). However, we believe this dataset has an artificially balanced label distribution, which is not ideal to illustrate semi-cyclic behavior \cite{go09}. In particular, the data collection process separately queried the Twitter API every 2 minutes for positive tweets (defined to be those containing the \textbf{:)} emoticon), and simultaneously for negative sentiment via \textbf{:(}. Since only up to 100 results are returned via each API query, this will generally produce an (artificially) balanced label distribution, as in Fig.~\ref{fig:sentiment140_bias}. Due to this fact, because large diurnal variations are likely in practice in Federated Learning (e.g., differences in the use of English language between the US and India), and because it better illustrates our theoretical results, we adjust the positive-sentiment rate as a function of time as described in section \ref{sec:experiments}.

\subsection{Details of evaluation methodology}\label{sec:exp_eval}
For the \textit{block-cyclic consensus} model, picking a random iteration of the form $t(k, i, n)$ ensures we evaluate a set of models that have the same expected number of iterations as for the single-chain pluralistic approach, without using block-specific models. In the implementation, we compute the expectation of this quantity by evaluating all $m$ iterates against all $m$ $\hat{F}_i$s, and averaging these $m^2$ values.

This same $m \times m$ set of evaluation results is used to evaluate the pluralistic single SGD chain approach, but instead of averaging all $m^2$ accuracies, we only consider the diagonal, where the model most recently trained on data from component $i$ is evaluated (only) on $\hat{F}_i$.

\end{document}